\documentclass[a4paper,fleqn]{cas-sc}

\usepackage{amsmath,amsfonts}
\usepackage{algorithm}
\usepackage{algorithmic}
\usepackage{array}
\usepackage{color}
\usepackage[caption=false,font=normalsize,labelfont=sf,textfont=sf]{subfig}
\usepackage{textcomp}
\usepackage{stfloats}
\usepackage{url}
\usepackage{verbatim}
\usepackage{graphicx}
\usepackage{multirow}
\usepackage{hyperref}
\usepackage{bm}
\usepackage{amsthm}
\usepackage{amssymb}
\theoremstyle{definition}

\newtheorem{theorem}{Theorem}
\newtheorem{lemma}{Lemma}

\newtheorem{proposition}[theorem]{Proposition}
\usepackage[T1]{fontenc}
\usepackage{orcidlink}
\hypersetup{hidelinks,
	colorlinks=true,
	allcolors=black,
	pdfstartview=Fit,
	breaklinks=true}
\hyphenation{op-tical net-works semi-conduc-tor IEEE-Xplore}
\def\BibTeX{{\rm B\kern-.05em{\sc i\kern-.025em b}\kern-.08em
    T\kern-.1667em\lower.7ex\hbox{E}\kern-.125emX}}
\usepackage{balance}

\usepackage{nomencl}
\makenomenclature

\usepackage{bbding}

\usepackage{graphicx}

% For pytroch coding.
\usepackage{listings}
\usepackage{courier}
\setlength {\marginparwidth }{2cm}
\usepackage{todonotes}
\usepackage{multicol}
\usepackage{multirow}

\usepackage{rotating}

\definecolor{codegreen}{rgb}{0,0.6,0}
\definecolor{codegray}{rgb}{0.5,0.5,0.5}
\definecolor{codepurple}{rgb}{0.58,0,0.82}
\definecolor{backcolour}{rgb}{0.95,0.95,0.92}

\lstdefinestyle{mystyle}{
	backgroundcolor=\color{backcolour},   
	commentstyle=\color{codegreen},
	keywordstyle=\color{magenta},
	numberstyle=\tiny\color{codegray},
	stringstyle=\color{codepurple},
	basicstyle=\small\ttfamily,
	breakatwhitespace=false,         
	breaklines=true,                 
	captionpos=b,                    
	keepspaces=true,                 
	numbers=left,                    
	numbersep=5pt,                  
	showspaces=false,                
	showstringspaces=false,
	showtabs=false,                  
	tabsize=2
}
\lstset{style=mystyle}

\usepackage{graphicx}
\usepackage{caption}

\usepackage{booktabs}

%%%Else Author macros

\begin{document}
\let\WriteBookmarks\relax
\def\floatpagepagefraction{1}
\def\textpagefraction{.001}
\shorttitle{Efficient State Space Model}
\shortauthors{T. Liang et~al.}
%\begin{frontmatter}

\title [mode = title]{Efficient State Space Model via Fast Tensor Convolution and Block Diagonalization}   

\author{Tongyi Liang}[style=chinese,orcid=0000-0001-8617-2396]
% \cormark[1]
% \fnmark[1]
\ead{tyliang4-c@my.cityu.edu.hk}

\credit{Conceptualization, Methodology, Software, Writing}

\affiliation{organization={Department of Systems Engineering, City University of Hong Kong},
                % addressline={Hong Kong}, 
                city={Hong Kong,  SAR, },
%               citysep={}, % Uncomment if no comma needed between city and postcode
                % postcode={695013}, 
                % state={Kerala},
                country={China}}

\author{Han-Xiong Li}[style=chinese,orcid=0000-0002-0707-5940]
\cormark[1]
\ead{mehxli@cityu.edu.hk}
\credit{Supervision, Funding acquisition}
	
\cortext[cor1]{Corresponding author}

\begin{abstract}
	Existing models encounter bottlenecks in balancing performance and computational efficiency when modeling long sequences. Although the state space model (SSM) has achieved remarkable success in handling long sequence tasks, it still faces the problem of large number of parameters. In order to further improve the efficiency of SSM, we propose a new state space layer based on multiple-input multiple-output SSM, called efficient SSM (eSSM).
    Our eSSM is built on the convolutional representation of multi-input and multi-input  (MIMO) SSM. We propose a variety of effective strategies to improve the computational efficiency. The diagonalization of the system matrix first decouples the original system. Then a fast tensor convolution is proposed based on the fast Fourier transform. In addition, the block diagonalization of the SSM further reduces the model parameters and improves the model flexibility.
    Extensive experimental results show that the performance of the proposed model on multiple databases matches the performance of state-of-the-art models, such as S4, and is significantly better than Transformers and LSTM.
    In the model efficiency benchmark, the parameters of eSSM are only 12.89\% of LSTM and 13.24\% of Mamba. The training speed of eSSM is 3.94 times faster than LSTM and 1.35 times faster than Mamba.
    Code is available at: \href{https://github.com/leonty1/essm}{https://github.com/leonty1/essm}.
	%Within this framework, the structured state space sequence (S4) layer leverages the convolutional representation of single-input single-output (SISO) SSM, and the simplified structured state space sequence (S5) layer employs the recurrent representation of multi-input and multi-input  (MIMO) SSM. This study introduces a new state space layer called Linear Dynamic Embedded Neural Network (eSSM), which builds upon the convolutional representation of MIMO SSM. 
	%show that the proposed model matches the performance of state-of-the-art models.% Specifically, our model achieves an average 81.64\% accuracy on the LRA benchmark and the best 90.19\% on the Retrieval task therein.
    % Our eSSM provides a new neural network layer for state-space models in efficient sequence modeling and a wider range of application scenarios.
\end{abstract}

\begin{graphicalabstract}
\includegraphics[width=1\textwidth]{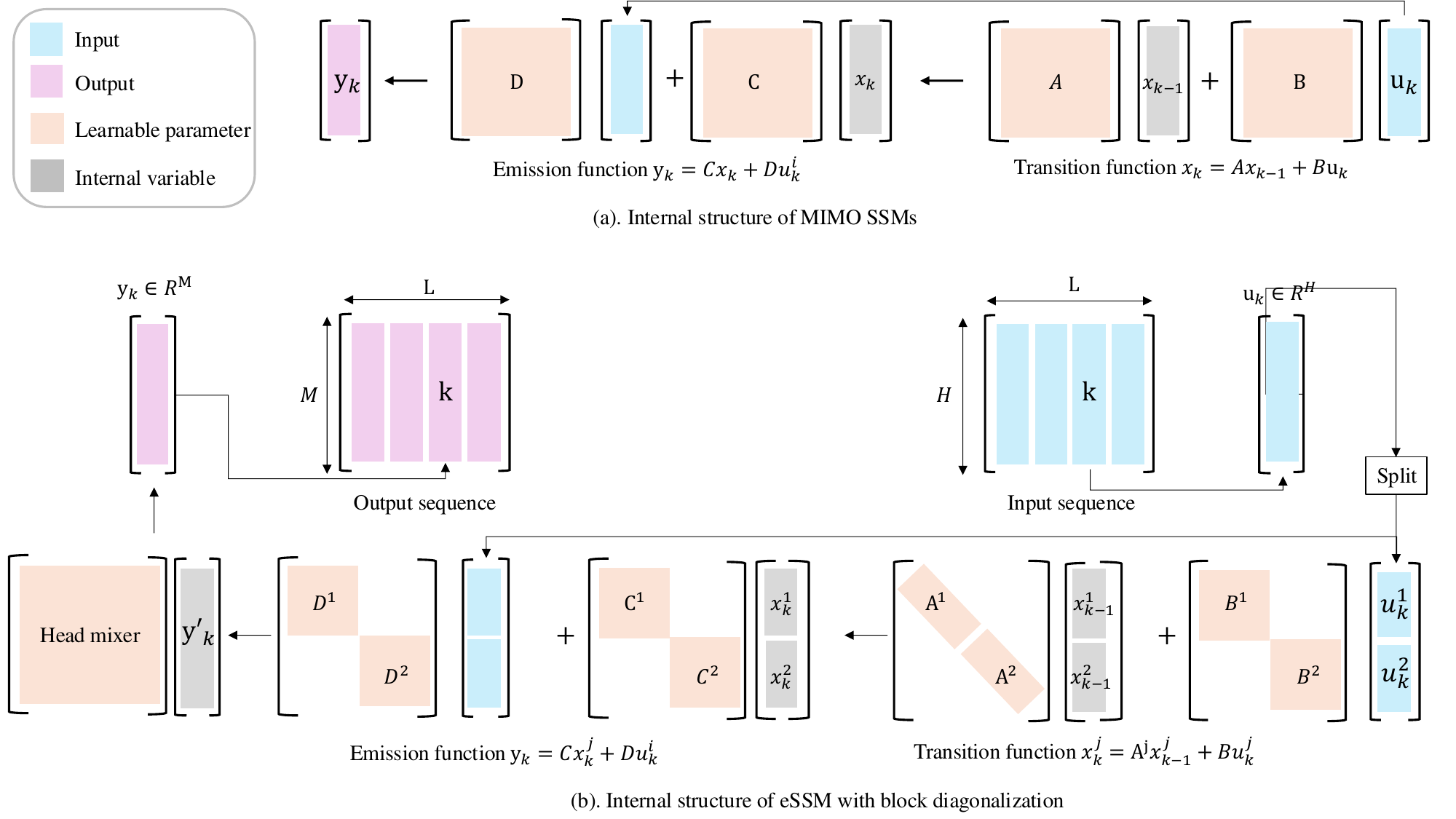}
\end{graphicalabstract}

\begin{highlights}
\item Proposed a novel MIMO state space model (SSM) architecture: We introduce a new state space layer called eSSM, utilizing the convolutional representation of multi-input multi-output (MIMO) SSMs, enabling efficient long sequence modeling with fewer parameters and faster training speed.
\item Improved computational efficiency with advanced techniques: The eSSM employs diagonalization and fast tensor convolution to reduce parameter size, along with bidirectional kernels and block diagonalization strategies that enhance both computational efficiency and model diversity without adding complexity.
\item Comprehensive evaluation against SOTA models: Our experiments show that eSSM achieves accuracy comparable to state-of-the-art models while significantly reducing parameter size and training time, making it a highly efficient solution for long sequence modeling tasks.
\end{highlights}

\begin{keywords}
deep learning \sep neural networks \sep state space models \sep sequence modeling
\end{keywords}

\maketitle

\section{Introduction}
Sequence modeling with long-range dependency is a crucial and challenging problem in many application fields of machine learning, e.g., natural language processing (NLP) \cite{chowdhary2020natural}, computer vision (CV) \cite{bautista2022scene}, and time series forecasting \cite{petropoulos2022forecasting}. Extensive research has been conducted to mitigate this problem. Among various works, deep learning has shown its compelling ability in sequence modeling.

Recurrent Neural Networks (RNNs) employ a recursive architecture with a memory mechanism that compresses history information. Compared with standard feedforward neural networks, RNNs can make flexible inferences at any time length. However, the gradient vanishing and exploding problem make them challenging to train \cite{bengio2013advances}. Subsequent works, like LSTM \cite{hochreiter1997long} and bidirectional RNN \cite{schuster1997bidirectional}, have attempted to address these issues. However, the sequential nature of RNNs makes them unable to be trained efficiently in a parallel setting because of the backpropagation across the time chain.

Transformer with attention mechanism avoids the problems in RNNs and realizes parallel computing \cite{vaswani2017attention}. Thereafter, it achieves remarkable success in extend fields, including demand forecasting \cite{zhou2023graph}, traffic flow forecasting \cite{du2024multi}. However, the quadratic complexity of length prevents the Transformer from exhibiting efficient performance in modeling long sequences. Many efficient variants have been introduced with reduced complexity to tackle this issue, such as Informer \cite{zhou2021informer}, and Crossformer \cite{zhang2022crossformer}. Unfortunately, these efficient variants provide unsatisfying results on long-sequence modeling tasks, worse than the original Transformer \cite{tay2020long}.

\begin{figure*}[htbp]
\centering
\subfloat[]{\includegraphics[width=0.32\linewidth]{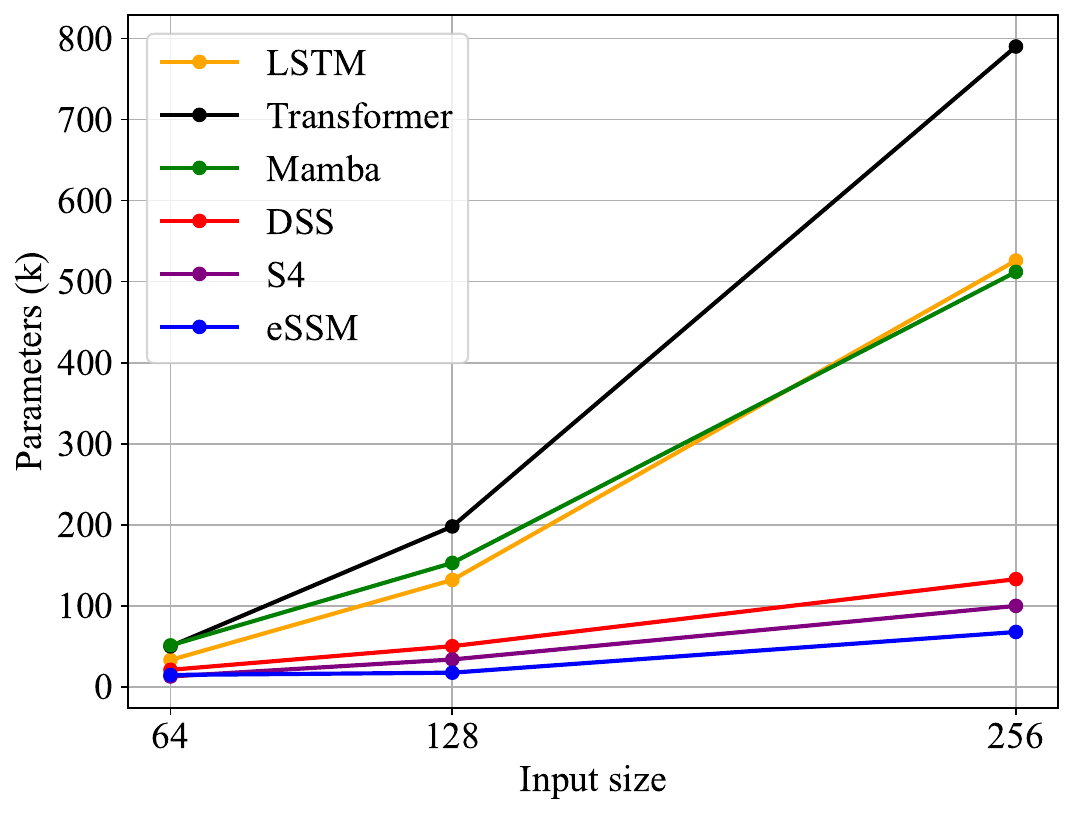}%
\label{fig_first_case}}
\hfil
\subfloat[]{\includegraphics[width=0.32\linewidth]{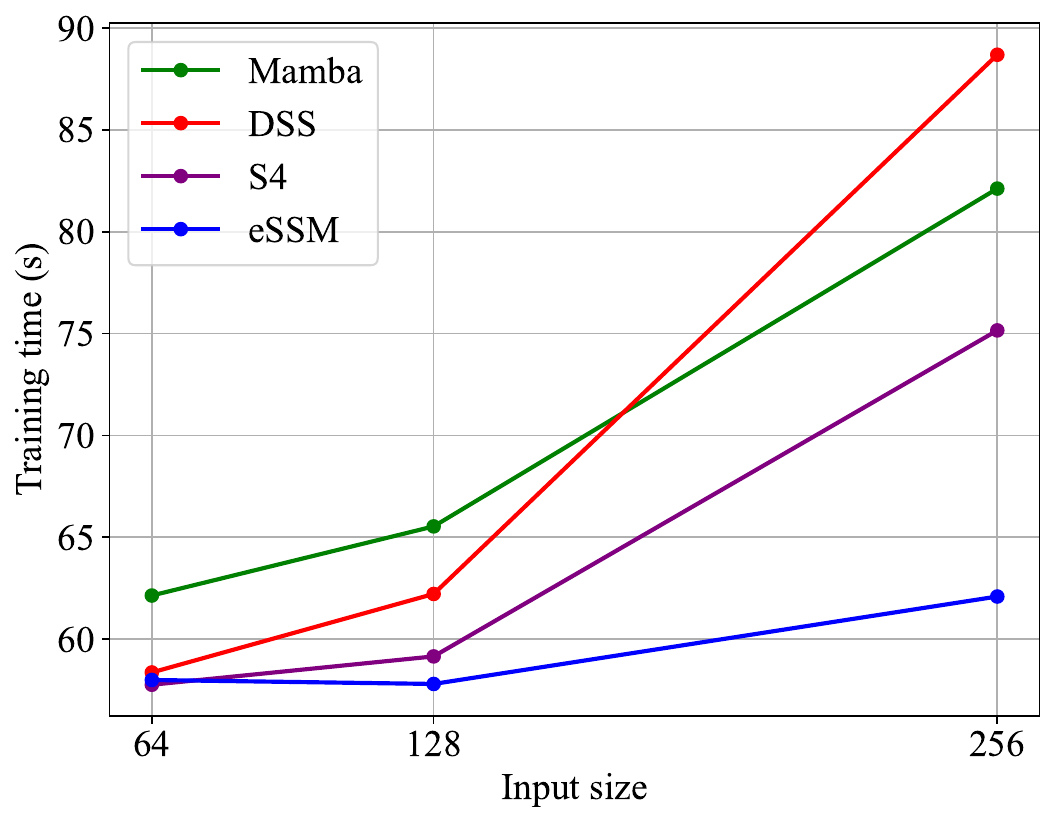}%
\label{fig_second_case}}
\hfil
\subfloat[]{\includegraphics[width=0.32\linewidth]{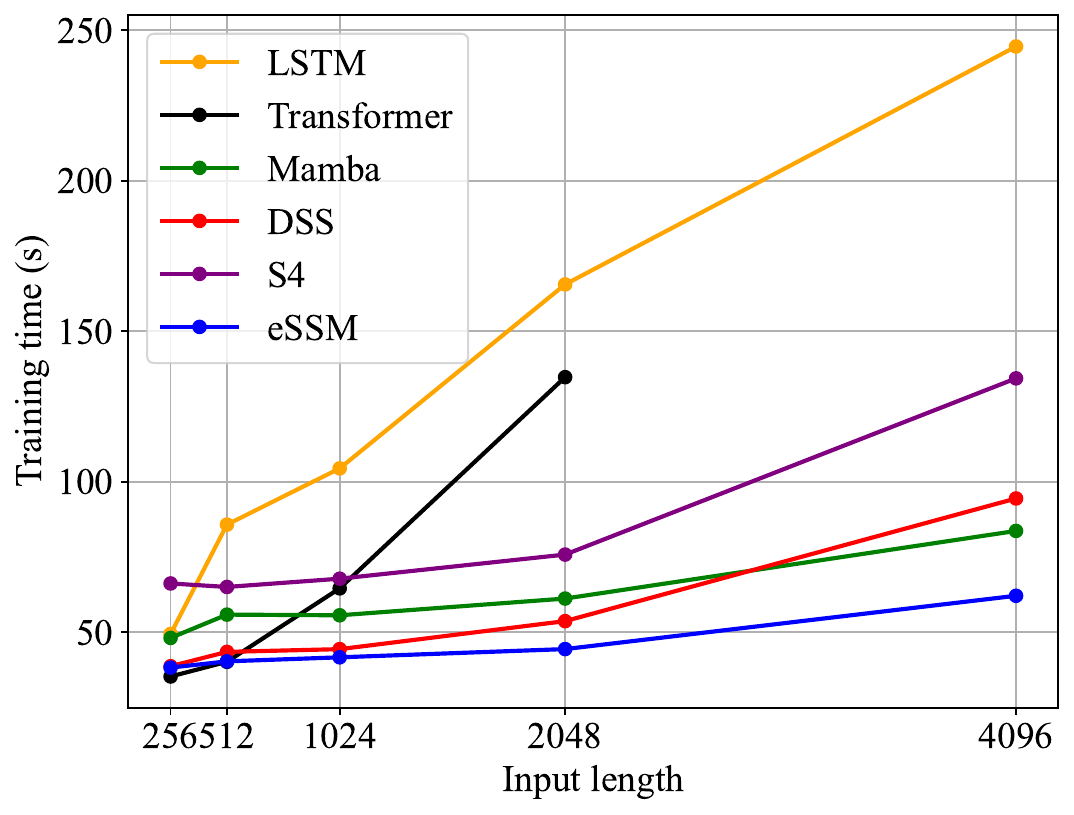}%
\label{fig_third_case}}
\caption{Evaluating the efficiency of eSSM and baseline methods for sequence modeling: (a). Model parameters vs. different input sizes. (b). Training time per epoch  vs. different input sizes. (c). Training time per epoch vs. different input sequence lengths. Our eSSM is the most efficient model, outperforming Transformer, LSTM, and other SSM-based models.}
\label{essm-fig: efficient comparison}
\end{figure*}

\begin{table}[]
	\caption{Comparison of different SSM-based models. S4 and S4D are based on state space models (SSMs) with single input single output (SISO); S5 and our eSSM are based on SSMs with multiple input multiple output (MIMO); S5 cannot work in convolution mode, but our eSSM makes it.}
	\label{tab:compare-ssm}
	\centering
	\begin{tabular}{cccc}
		\hline
		Model & Type & Recurrent  & Convolutional 	\\ \hline
		S4    & SISO & \Checkmark & \Checkmark   	\\
		S4D   & SISO & \Checkmark & \Checkmark  	\\ \hline
		S5    & MIMO & \Checkmark & \XSolid    		\\
		Our eSSM  & MIMO & \Checkmark & \Checkmark  	\\ \hline
	\end{tabular}
\end{table}

Recently, state space models (SSMs) based neural networks have made significant progress in long-sequence modeling. LSSL \cite{gu2021combining} incorporates the HiPPO theory \cite{gu2020hippo} and outperforms the Transformer family models in many sequence tasks. Hereafter, some variants of SSMs are proposed, like S4 \cite{gu2021efficiently}, DSS \cite{gupta2022diagonal}, S4D \cite{gu2022parameterization} and so on. They all show remarkable abilities in modeling long sequences on various tasks, including time series forecasting, speech recognition, and image classification.

However, there still exist some research gaps to be addressed.
\begin{itemize}
	\item Architectural limitations. Existing models, like S4, are based on single-input and single-output (SISO) SSMs stacked parallel for modeling multi-variate sequences. An additional linear layer is needed to mix features from different SISO SSMs. This approach not only increases the complexity of the model, but also may lead to insufficient information fusion, affecting the model's effective modeling of multivariate sequences.
	\item Theoretical incompleteness. SSMs with multi-input and multi-output (MIMO) theoretically do not require additional linear layers. However,  to our best knowledge, S5 \cite{smith2022simplified} is currently the only work based on MIMO SSMs. As shown in Table \ref{tab:compare-ssm}, S5 uses the recurrent representation for training. However, the convolution representation of MIMO SSMs has not been reported yet. The lack of such research not only limits our understanding of MIMO SSM, but also limits its potential in practical wide applications.
	\item Challenges in computational efficiency. Existing SSM-based models, including S4 and S5, have large parameter sizes, resulting in high consumption of computing resources and extended training time. Therefore, further reducing the number of model parameters to improve computational efficiency and storage efficiency is an important issue that needs to be addressed in current research.
\end{itemize}

In this work, we introduce an \underline{e}fficient \underline{s}tate \underline{s}pace \underline{m}odel (eSSM) designed to solve the abovementioned issues. We model the dynamics of sequential data using continuous MIMO SSMs, which have the properties of fewer parameters than discrete neural networks. eSSM is built on MIMO SSMs, which, unlike SISO SSM, can directly model the multi-variate sequences. Then, a discrete representation can be obtained through discretization. The discretized SSMs are recurrent models that can make flexible inferences as RNNs. Furthermore, to train the proposed model efficiently, we derive its convolutional representations, which enables the training process to be more efficient with parallel computing.

We reduce the number of model parameters and improve the computational efficiency of the model by diagonalization and fast tensor convolution via fast Fourier transform (FFT) strategies. Specifically, during the training phase, the diagonal SSMs in convolutional representations significantly reduce time and space complexity. Afterwards, we disentangle and transform the original convolution operation between the state kernel and multi-variate input sequence equivalently. Then, FFT is applied for the disentangled convolution calculation.
%We parameterize and initialize the proposed eSSM in a general format. All parameters are optional and can be learnable or not. The system matrix $A$ can be initialized via the HiPPO, random or constant matrix.

The linear SSMs used in this work are causal systems, which may limit their applications to non-causal cases, such as image classification. To fix this, we propose the non-causal variant with a bidirectional kernel. Besides, motivated by Multi-Head Attention \cite{vaswani2017attention}, we design the block-diagonal eSSM. The original input vector is split into multiple sub-vectors and then modeled by eSSM. Outputs of all eSSM are concatenated and mixed by a linear projection function.

We evaluate the effectiveness and performance of eSSM using extensive datasets, including classification tasks over text, image, and audio. Our models outperform Transformer variants and match the performance of the state-of-the-art models. In terms of computational efficiency, eSSM has far fewer parameters than various baselines, including Transformer, LSTM, and Mamba. And eSSM also has significantly faster training speed than them, as shown in \autoref{essm-fig: efficient comparison}.

The main contributions of our work are summarized as follows.
\begin{enumerate}
	\item We extended the existing SSM family and designed a new state space layer called eSSM. Using the convolutional representation of MIMO SSM, we achieved efficient long sequence modeling, which has the characteristics of few parameters and fast training speed (\autoref{essm sec: Preliminary}).
	\item We have proposed a variety of strategies to effectively improve the efficiency of the model. Using diagonalization and fast tensor convolution, we can significantly reduce the number of parameters and improve the training and inference speed. The bidirectional kernel without introducing additional parameters and the block diagonalization strategy further improve the efficiency of eSSM and increase its diversity (\autoref{essm sec: Methodology}).
	\item We conduct a comprehensive comparison and analysis of the proposed model with various SOTA baselines in terms of accuracy and computational efficiency. The results show that our eSSM can achieve accuracy matching that of SOTA methods, but our model has significantly fewer parameters and faster training speed. (\autoref{essm sec: Experiments}).
\end{enumerate}
\section{Related Work}
%We provide overall related work on long-sequence modeling.
\textbf{CNN for long-sequence modeling}. Convolutional Neural Networks (CNNs) achieve great success in CV tasks and significantly impact sequence modeling. For example, Li et al. \cite{li2018convolutional} construct a hierarchical CNN model for human motion prediction. However, CNNs require large sizes of parameters to learn long-range dependency in a long-sequence task. Romero \cite{romero2021ckconv} introduces a continuous kernel convolution (CKConv) that generates a long kernel for an arbitrarily long sequence. Motivated by S4 \cite{gu2021efficiently}, Li et al.\cite{li2022makes}
try to learn the global convolutional kernel directly with a decaying structure, which can be seen as an easier alternative to SSMs. Fu et al. \cite{fu2023simple} find that simple regularizations, squashing, and smoothing can have long convolutional kernels with high accuracy.

\textbf{Continuous-time RNNs}. RNNs are the primary choice for modeling sequences \cite{gu2024modeling}. Extensive work has been done to improve RNNs by designing gating mechanisms \cite{hochreiter1997long} and memory representations \cite{voelker2019legendre}. Neural ordinary differential equations (Neural-ODE) have recently attracted researchers' attention \cite{chen2018neural}. This special kind of continuous RNNs, also including ODE-RNN \cite{rubanova2019latent} and Neural-CDE \cite{kidger2020neural}, perform well in sequential tasks.% Interested readers can refer to Kidger's thesis \cite{kidger2022neural}, where a comprehensive introduction to neural-ODE is represented.
\begin{comment}
@article{kidger2022neural,
	title={On neural differential equations},
	author={Kidger, Patrick},
	journal={arXiv preprint arXiv:2202.02435},
	year={2022}
}
@inproceedings{gu2020improving,
title={Improving the gating mechanism of recurrent neural networks},
author={Gu, Albert and Gulcehre, Caglar and Paine, Thomas and Hoffman, Matt and Pascanu, Razvan},
booktitle={International Conference on Machine Learning},
pages={3800--3809},
year={2020},
organization={PMLR}
}
\end{comment}

\textbf{Efficient Transformer}. The machine learning community has made great efforts to improve the computation efficiency of the vanilla Transformer \cite{vaswani2017attention}. Introducing sparsity into the Attention mechanism is a way to reduce complexity. Examples include Longformer \cite{beltagy2020longformer}, BigBird \cite{zaheer2020big}, and Sparse Transformer \cite{child2019generating}. Besides, some works like Performer \cite{choromanski2020rethinking} and Linear Transformer \cite{katharopoulos2020transformers} aim to approximate or replace the original attention matrix with linear computation. Some approaches reduce the complexity by computing attention distributions for clustered queries \cite{vyas2020fast} or replace the attention with low-rank approximation \cite{guo2019low}.% Some systematic reviews are available for further reading \cite{lin2022survey, han2022survey}.
\begin{comment}
@article{lin2022survey,
	title={A survey of transformers},
	author={Lin, Tianyang and Wang, Yuxin and Liu, Xiangyang and Qiu, Xipeng},
	journal={AI Open},
	year={2022},
	publisher={Elsevier}
}
@article{han2022survey,
	title={A survey on vision transformer},
	author={Han, Kai and Wang, Yunhe and Chen, Hanting and Chen, Xinghao and Guo, Jianyuan and Liu, Zhenhua and Tang, Yehui and Xiao, An and Xu, Chunjing and Xu, Yixing and others},
	journal={IEEE transactions on pattern analysis and machine intelligence},
	volume={45},
	number={1},
	pages={87--110},
	year={2022},
	publisher={IEEE}
}
\end{comment}

\textbf{SSMs-based NN}. SSMs-based deep models are recent breakthroughs in long-sequence modeling. In their inspiring work, Gu et al. \cite{gu2021combining} propose LSSL, which combines recurrence, convolution, and time continuity. Later, to improve the computational efficiency of LSSL, Gu et al. \cite{gu2021efficiently} designed an efficient layer called S4 by transforming the system matrix into a structured normal plus low-rank matrix. Gupta et al. \cite{gupta2022diagonal} propose a DSS that matches the performance of S4 by assuming the system matrix is diagonal. Afterward, several modified variants of S4 and DSS are introduced as S4D \cite{gu2022parameterization}, GSS \cite{mehta2022long}, Mamba \cite{gu2023mamba}, and so on.% The SSMs-based models have also been successfully applied in many application fields, including speech processing \cite{ku2023multi}, vision tasks \cite{baron20232}, and language modeling \cite{dao2022hungry}.
\begin{comment}
, MSSM \cite{fathullah2022multi}
@article{fathullah2022multi,
	title={Multi-Head State Space Model for Sequence Modeling},
	author={Fathullah, Yassir and Wu, Chunyang and Shangguan, Yuan and Jia, Junteng and Xiong, Wenhan and Mahadeokar, Jay and Liu, Chunxi and Shi, Yangyang and Kalinli, Ozlem and Seltzer, Mike and others},
	year={2022}
}
\end{comment}

The main difference between our work and the above SSMs-based models is that they are based on SISO SSMs, but our work is built directly on MIMO SSMs. It should be noted that SISO SSMs are a particular case of MIMO SSMs. The most similar work is S5 \cite{smith2022simplified}, which uses MIMO SSMs. In S5, the authors use parallel scans to learn under the recurrent representation of MIMO SSM. In our work, however, eSSM learns under the convolutional representation of MIMO SSM with proposed efficient strategies. Appendix C introduces the comparison of related models.
\section{Preliminary}\label{essm sec: Preliminary}
\subsection{Problem Description}
The sequence modeling aims to construct a map $\mathcal{F}$  with parameter $\theta$ between an input sequence $\mathcal{U}$ and output $\mathcal{Y}$. Therefore, this problem is defined as:
\begin{equation}
	\label{problem_sq}
	\mathcal{F}_\theta: \mathcal{U}\mapsto \mathcal{Y}
\end{equation}

% \subsection{Linear Dynamic Model}
% \label{sec: ssm}
\subsection{Continuous SSMs}
Continuous linear SSMs are classical models that describe the dynamics of linear systems. Typically, a continuous linear dynamic system with multi-input and multi-output (MIMO) is defined as the following general form:
\begin{align}
	\dot{x}(t)=Ax(t) + Bu(t), \quad
	y(t) = Cx(t) + Du(t).
	\label{continuous_ssm}
\end{align}
where system matrix $A \in \mathbb{R}^{N\times N}$, input matrix $B \in \mathbb{R}^{N\times H}$, output matrix $C \in \mathbb{R}^{M\times N}$, direct transition matrix $D \in \mathbb{R}^{M\times H}$. $u(t) \in \mathbb{R}^H, x(t)\in \mathbb{R}^N$, and $y(t)\in \mathbb{R}^M$ are input, state, and output of the system. The internal computation structure is shown in Fig. \ref{InternalStructureeSSM} (a).

\begin{figure*}
	\centering
	\includegraphics[width=1\textwidth]{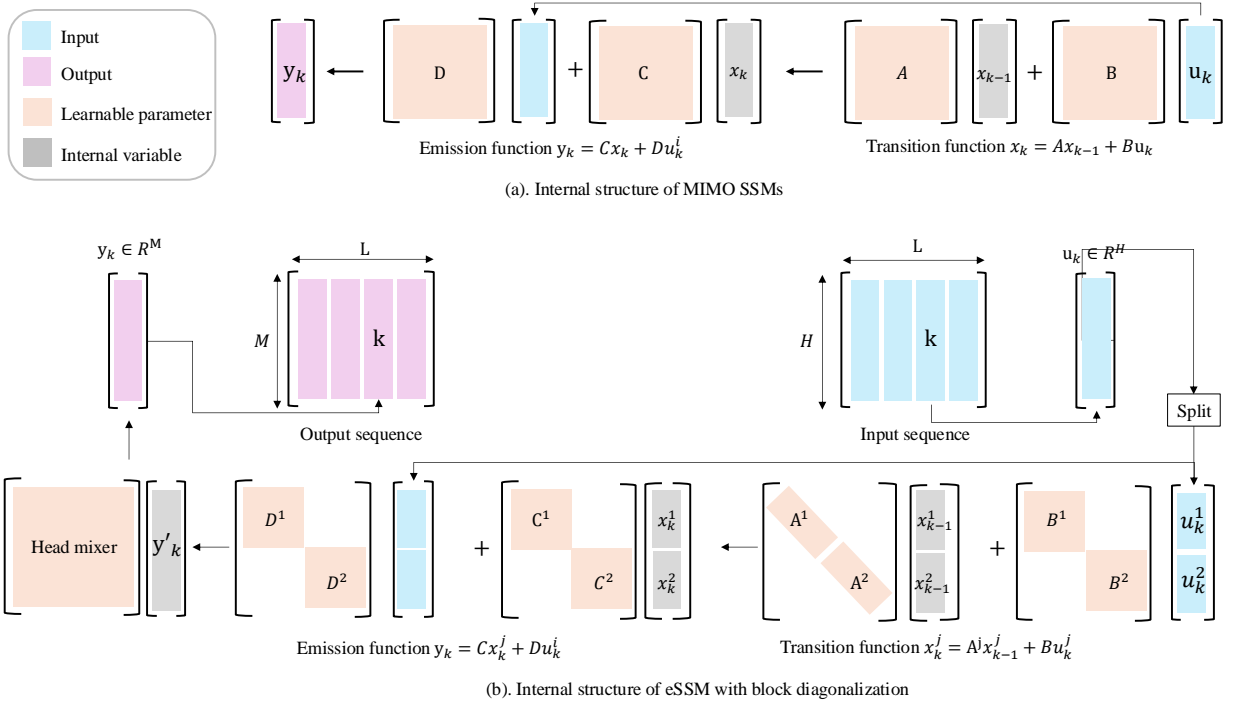}
	\caption{The internal structure of state space models (SSMs) with multi-input and multi-output (MIMO) (a) and Multi-Head eSSM (b).}
	\label{InternalStructureeSSM}
\end{figure*}

In control theory, state $x$ is a variable that represents the dynamic state of the system. For example, when SSMs are used to model the moving process of an object, the state $x$ could include the velocity and acceleration of this object. However, in pure data cases, assigning the physical meanings for the state is intractable. We can understand this model from the perspective of polynomial projection. That is, given the sequence $\mathcal{U}$ generated from an unknown function $f$, $x$ are the coefficients of projected polynomials. Thus, $x$ can be seen as the compression of all history \cite{gu2020hippo}. In other words, $x$ could be interpreted as the memory of history.

It should be noted that SSMs with single-input and single-output (SISO) are the particular case of MIMO SSMs. Typically, the SISO SSMs, e.g., S4 \cite{gu2021efficiently} and DSS \cite{gupta2022diagonal}, are stacked in parallel and require an additional linear layer to mix features. However, MIMO SSMs model the multi-dimensional sequences directly, and no more feature-mixing layers are needed in theory.

\subsection{Discrete SSMs}
We discrete the continuous SSMs \eqref{continuous_ssm} to learn from the discrete sequence. The discrete SSMs with a time step $\Delta\in \mathbb{R_{+}}$ are expressed as follows.
\begin{align}
	x_{k}=\bar{A} x_{k-1} + \bar{B}u_k, \quad
	y_k = \bar{C}x_k + \bar{D}u_k.
	\label{discret_ssm}
\end{align}
There are many standard discretization technologies available. More details about discretization methods, including the Generalized Bilinear Transformation method (GBT), are introduced in Appendix B.2. Coefficients matrix $\bar{C}=C$, and $\bar{D}=D$, but $\bar{A}$ and $\bar{B}$ would differ in different methods.  In this work, we use the Zero-order Hold (ZOH) method \cite{gu2012discrete}, where
\begin{align}
	\bar{A}= e^{A\Delta}, \quad
	\bar{B}= A^{-1}(e^{A\Delta}-I)B.
	\label{discret_zoh}
\end{align}

Discrete SSMs are recurrent models, and state $x$ can be an analogy to the hidden state in RNNs. The recurrent models have excellent advantages in flexibility during the inference process but are hard to train.

\subsection{Convolutional SSMs}
The convolution nature of SSMs makes it possible to train SSMs efficiently. Let the initial state be $x_{0}$, and we then unroll the recursive process Eq. \eqref{discret_ssm} along time.
\begin{align*}
	&x_{1}=\bar{A}x_{0} + \bar{B}u_{1} \\
	%&x_{1}= \bar{A}^2x_{-1}+\bar{A}\bar{B}u_{0}+\bar{B}u_{1}\\
	&\vdots \\
	&x_{k}=\underbrace{\bar{A}^{k}x_{0}}_{ bias}+\underbrace{\bar{A}^{k-1}\bar{B}u_{1}+\cdots +\bar{B}u_{k}}_{convolution\, operation}\\
	&\vdots \\
	&x_{L}=\bar{A}^{L}x_{0}+\bar{A}^{L-1}\bar{B}u_{1}+\cdots +\bar{B}u_{L}
\end{align*}

State $x_k$ is the summation of a bias and a convolution operation of all history inputs. We define the state kernel as Eq. \eqref{state_kernel} for simplicity.

\begin{equation}\label{state_kernel}
	K \triangleq \left(\bar{B}, \bar{A}\bar{B},\cdots , \bar{A}^{i-1}\bar{B},\cdots , \bar{A}^{L-1}\bar{B}\right)
\end{equation}

Then, the discrete SSMs \eqref{discret_ssm} are reformulated into the convolutional representation of discrete SSMs Eq. \eqref{convolution_ssm}. Besides, the convolutional view of continuous SSMs is given in Appendix B.1.
\begin{align}
	x = K \ast u, \quad
	y = \bar{C}x + \bar{D}u.
	\label{convolution_ssm}
\end{align}
where $u=\{u_1, ..., u_L\}$, $x=\{x_1, ..., x_L\}$, and $y=\{y_1, ..., y_L\}$ are discrete sequences sampled from continuous functions $u(t)$, $x(t)$, and $y(t)$.

We usually assume $x_0=0$, since the ground-true value of the initial state $x_0$ is unknown. Then, the bias term $\bar{A}^{k}x_{0}$ becomes an error term between the estimated state $\hat{x}_{k}=K \ast u$ and ground-truth state $x_k$. Fortunately, based on Proposition \ref{thm:state_convergence}, we can theoretically ignore the influence of this error term.

\begin{proposition}[State Convergence]\label{thm:state_convergence}
	If all eigenvalues of $A$ have a negative real part, for any given initial state $\hat{x}_0$, the estimated state $\hat{x}_{k}$ would convergent to its real values $x_k$ over time.
\end{proposition}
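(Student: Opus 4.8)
The plan is to reduce the convergence claim to a statement about the powers of the discretized system matrix $\bar{A}$. First I would write down both the true state and the estimated state by unrolling the recurrence \eqref{discret_ssm}. The true state carries the unknown initial condition $x_0$, giving $x_k = \bar{A}^k x_0 + \sum_{i=1}^{k}\bar{A}^{k-i}\bar{B}u_i$, while the estimate built from $\hat{x}_0$ is $\hat{x}_k = \bar{A}^k \hat{x}_0 + \sum_{i=1}^{k}\bar{A}^{k-i}\bar{B}u_i$. Subtracting, the convolution terms cancel exactly because they share the same inputs and coefficients, leaving the clean error expression $\hat{x}_k - x_k = \bar{A}^k(\hat{x}_0 - x_0)$. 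Thus convergence of $\hat{x}_k$ to $x_k$ is equivalent to showing $\bar{A}^k \to 0$ as $k \to \infty$, since the initial gap $\hat{x}_0 - x_0$ is a fixed vector independent of $k$.

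Next I would connect the spectrum of $\bar{A}$ to that of $A$ using the ZOH relation $\bar{A} = e^{A\Delta}$ from \eqref{discret_zoh}. By the spectral mapping theorem, the eigenvalues of $\bar{A}$ are exactly $e^{\lambda\Delta}$ as $\lambda$ ranges over the eigenvalues of $A$. Writing $\lambda = \alpha + i\beta$ with $\alpha < 0$ by hypothesis, I would compute the modulus $|e^{\lambda\Delta}| = e^{\alpha\Delta}|e^{i\beta\Delta}| = e^{\alpha\Delta}$, which is strictly less than one because $\alpha < 0$ and $\Delta > 0$. Taking the maximum over all eigenvalues, this establishes that the spectral radius satisfies $\rho(\bar{A}) < 1$.

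The final and most delicate step is to convert the spectral-radius bound into actual decay of the matrix powers. Having eigenvalues inside the unit disk is not by itself enough to conclude $\bar{A}^k \to 0$ when $\bar{A}$ is not diagonalizable, so I would treat the general case. One route is the Jordan normal form $\bar{A} = P J P^{-1}$: the powers $\bar{A}^k = P J^k P^{-1}$ reduce to powers of Jordan blocks, whose entries have the form $\binom{k}{j}\mu^{k-j}$ for an eigenvalue $\mu$ with $|\mu|<1$, and here the geometric decay $|\mu|^k$ dominates the polynomial factor $\binom{k}{j}$, forcing every entry to zero. A cleaner alternative is Gelfand's formula $\rho(\bar{A}) = \lim_{k\to\infty}\|\bar{A}^k\|^{1/k}$, which guarantees that for any $r$ with $\rho(\bar{A}) < r < 1$ there is a constant $c$ such that $\|\bar{A}^k\| \le c\,r^k$, hence $\|\bar{A}^k\| \to 0$. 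Either way, $\|\hat{x}_k - x_k\| \le \|\bar{A}^k\|\,\|\hat{x}_0 - x_0\| \to 0$, completing the argument. I expect this last step, ruling out transient polynomial growth from non-trivial Jordan blocks, to be the main technical obstacle, while the earlier reduction and the eigenvalue computation are routine.
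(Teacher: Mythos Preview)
Your proposal is correct and follows the same strategy as the paper: reduce the error to $\bar{A}^k$ applied to the initial gap and then argue that the eigenvalue hypothesis on $A$ forces $\bar{A}^k\to 0$. The paper's own proof is a two-line sketch that writes the informal inequality ``$0<\bar{A}<1$'' and immediately concludes $\bar{A}^k x_0\to 0$, so your use of the spectral mapping theorem and the Gelfand/Jordan argument is strictly more careful than what the paper actually supplies.
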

\begin{proof}
	All eigenvalues of $A$ have negative real parts, so we can get $0< \bar{A} < 1$. Thus, the state converges as bias $\bar{A}^{k}x_{0}$ decays to 0 over time, i.e., $\lim_{k \to \infty } \left| x_k- \hat{x}_{k}\right|=\lim_{k \to \infty }\bar{A}^{k}x_{0} =  0$.
\end{proof}

Convolutional representation allows us to infer and train SSM in parallel. Nevertheless, there are also the following challenges if naively applying the convolutional SSM, which still have high time and space complexity.

\textbf{Challenge in Computing $K$:}  Computing the state kernel $K$ in Eq. \eqref{state_kernel} requires $L$-th power operation on square matrix A. This process requires a lot of computing resources and time, as shown in Fig. \ref{fig:ablation}.

\textbf{Challenge in Tensor Convolution:} After obtaining the state kernel, the convolution calculation $K \ast u$ in Eq. \eqref{convolution_ssm} involves two tensors. In S4 and S4D, which are based on SISO SSM, $K$ and $u$ are univariate sequences and can be efficiently calculated by Fast Fourier Transform (FFT). How to efficiently calculate this tensor convolution is a problem that needs to be solved.
\section{Methodology: Efficient SSM}
\label{essm sec: Methodology}

% \subsection{Efficient Convolutional MIMO SSM}
% \label{sec: efficient_ssm}
% To break this computational bottleneck, we propose the following two strategies.

\subsection{Uncouple SSM via Diagonalization}
The state kernel $K$ is the premise for the convolutional SSM \eqref{convolution_ssm}. We firstly rewrite the state kernel $K$ as $K=V\bar{B}$, where $V$ is the system kernel defined as follows.

\begin{equation}\label{state_Vandermonde}
	V \triangleq \left(1, \bar{A},\cdots , \bar{A}^{i-1},\cdots , \bar{A}^{L-1}\right)
\end{equation}

Calculating $V$ is extremely expensive if the system matrix $A$ is full. To solve this, we can uncouple the MIMO SSM \eqref{continuous_ssm} into the diagonal SSM \eqref{diagonal_ssm} equivalently using the Diagonalization Equivalence Lemma.

\begin{lemma}[Diagonalization Equivalence]\label{thm:diagonalization}
	If matrix $A\in\mathbb{R}^{N\times N}$ has $N$ real and distinct eigenvalues \{$\lambda_1,...,\lambda_N$\}, then there exists an invertible matrix $T$, such that system \eqref{continuous_ssm} can be transformed equivalently into system \eqref{diagonal_ssm}.
\begin{align}
	\dot{z}(t)=\Lambda z(t) + B'u(t), \quad
	y(t) = C'z(t) + D'u(t).
	\label{diagonal_ssm}
\end{align}
where $\Lambda=T^{-1}AT=diag\{\lambda_1,...,\lambda_N\}, B'=T^{-1}B, C'=CT, D'=D$
\end{lemma}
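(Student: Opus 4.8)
The plan is to recognize this as the classical similarity-transformation (eigendecomposition) argument specialized to state-space systems, so the proof divides into two clean parts: (i) constructing the diagonalizing matrix $T$ from linear algebra, and (ii) verifying that the induced change of state variable preserves the input--output behavior of the system. I would state at the outset the change of variable $z(t) = T^{-1}x(t)$ (equivalently $x(t) = Tz(t)$), since everything else follows by substitution.

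First I would establish the existence of $T$. Since $A$ has $N$ distinct real eigenvalues $\{\lambda_1,\dots,\lambda_N\}$, the associated eigenvectors $v_1,\dots,v_N$ are linearly independent (eigenvectors corresponding to distinct eigenvalues are independent — a standard fact provable by a short induction or a Vandermonde argument). Collecting them as columns, $T = [\,v_1\ \cdots\ v_N\,]$ is invertible, and by definition $AT = T\Lambda$ with $\Lambda = \operatorname{diag}\{\lambda_1,\dots,\lambda_N\}$, hence $T^{-1}AT = \Lambda$. This is exactly the $\Lambda$ claimed in the statement.

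Next I would substitute $x(t)=Tz(t)$ into the state equation of \eqref{continuous_ssm}. Differentiating gives $T\dot z(t) = ATz(t) + Bu(t)$, and left-multiplying by $T^{-1}$ yields
\begin{equation*}
	\dot z(t) = T^{-1}AT\,z(t) + T^{-1}B\,u(t) = \Lambda z(t) + B'u(t),
\end{equation*}
which identifies $B' = T^{-1}B$. Substituting the same relation into the output equation gives $y(t) = CTz(t) + Du(t) = C'z(t) + D'u(t)$ with $C' = CT$ and $D' = D$, matching \eqref{diagonal_ssm} term by term.

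Finally I would address the word \emph{equivalently}, which is the only conceptual point beyond routine algebra: the transformed system must reproduce the identical output trajectory for every admissible input. I would note that the map $x \mapsto z = T^{-1}x$ is a bijection on the state space, and that under the initial-condition correspondence $z(0) = T^{-1}x(0)$ the two realizations generate the same $y(t)$ for all $t$ and all $u$; since $D'=D$ and the state transformation is invertible, no information is lost and the realizations are input--output equivalent. I expect the construction of $T$ and the substitution to be entirely mechanical; the main (mild) obstacle is simply being careful to justify linear independence of the eigenvectors and to articulate the equivalence at the level of input--output behavior rather than merely matching coefficient matrices.
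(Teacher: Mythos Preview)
Your proposal is correct and follows the standard similarity-transformation argument for linear state-space systems. The paper itself does not give a proof of this lemma at all: it simply writes ``we omit the proof of Lemma~\ref{thm:diagonalization}, since it is similar to the proof of Theorem~11.1 in~\cite{lu2022matrix},'' so your two-part outline (construct $T$ from the linearly independent eigenvectors, then substitute $x(t)=Tz(t)$ and verify input--output equivalence) is exactly the textbook argument the authors are deferring to, and there is nothing to compare against.
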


Here, we omit the proof of Lemma \ref{thm:diagonalization}, since it is similar to the proof of Theorem 11.1 in \cite{lu2022matrix}. The diagonalization simplifies the calculation of $V$. Furthermore, the following statements that enable efficient calculation are derived:
\begin{itemize}
	\item Given diagonal $\Lambda$, its discretization $\bar{\Lambda}$ is also diagonal.
	%\item The system kernel $V$ is a Vandermonde tensor. 
	\item If using ZOH, $\bar{\Lambda}^i=diag\{e^{i\lambda_1 \Delta},...,e^{i\lambda_N \Delta}\}$.
	\item The time complexity of $V$ is reduced from $O(LN^3)$ to $O(LN)$.
	\item The space complexity of $V$ is reduced from $O(LN^2)$ to $O(LN)$.
\end{itemize}

For the sake of convenient notation, we will not distinguish between $B, C, D$ and $B', C,' D'$ in the following text because diagonalization does not change their structure.

\subsection{Fast Tensor Convolution}
Given $K$ and $u$, the convolution operation needs $O(L^2NH)$ multiplications, which are time-consuming and unable to be calculated in parallel. The convolution theorem realizes an efficient way to calculate convolution. More concretely, Fast Fourier Transform (FFT) could reduce the time complexity from $O(L^2)$ to $O(L\log L)$ of convolution operation for a univariate polynomial with length $L$ \cite{rao2010fast}. 

However, the state kernel $K \in \mathbb{R}^{L\times N\times N}$ and input sequence $u \in \mathbb{R}^{L\times H}$ are multidimensional tensors. Thus, FFT cannot be used directly. We solve this problem by disentangling the tensor convolution.

Firstly, based on the definition of $V$ in Eq. \eqref{state_Vandermonde}, we shall rewrite the kernel convolution and further reformulate it into Eq. \eqref{rewrite_x1} based on the $'\text{Associativity with Multiplication}'$ property of convolution.
\begin{align}\label{rewrite_x1}
	x=K\ast u
	= V\bar{B}\ast u
	= V\ast \bar{B}u
\end{align}
According to the definition of convolution, we expand Eq. \eqref{rewrite_x1}. For ease of understanding, we use the notation of tensor as introduced in \cite{kolda2009tensor}. For example, the system state at time step $k$ is noted as $V_{k::}=\bar{A}^{k-1}$, and its $i$-th eigenvalue (that is the diagonal element) is expressed as $V_{kii}=\bar{\lambda}^{k-1}_i$.

The state $x_{k}$ at time $k$ is expressed as follows.
\begin{equation}
	x_{k} = \sum_{m=1 }^{k} V_{m::}\cdot (\bar{B}u)_{(k-m):}
\end{equation}

When we use diagonal SSM, the system matrix $A$ is a diagonal matrix $\Lambda$. Thus, $V_{m::} \in \mathbb{R}^{N\times N}$ is a diagonal matrix.
The multiplication between the diagonal matrix $V_{m::}$ and vector $(\bar{B}u)_{(k-m):}$ is reformulated as follows.

\begin{equation}
	x_{ki} = \sum_{m=1 }^{k} V_{mii}\cdot (\bar{B}u)_{(k-m)i} \label{daig_v}
\end{equation}

Using the definition of convolution, we shall get
\begin{equation}
	x_{:i} = V_{:ii}\ast (\bar{B}u)_{:i} \label{disent_conv}
\end{equation}

With this, we successfully disentangle the original convolution into Eq. \eqref{disent_conv}, where $V_{:ii}$ and $(\bar{B}u)_{:i1}$ are both univariate sequence. We can apply FFT to calculate Eq. \eqref{disent_conv} efficiently.

\begin{equation}\label{convolution_fft}
	x_{:i}= iFFT\{FFT\{V_{:ii}\}\cdot FFT\{(\bar{B}u)_{:i}\}\}
\end{equation}
where $iFFT$ is the inverse FFT.%, $x_i, V_i$ and $(Bu)_i$ are shorthanded from $x_{i1:}, V_{ii:}$, and $(Bu)_{i1:}$.

\begin{proposition}[Efficient Convolution with Diagonalization]\label{thm:fast convolution}
	Using diagonalization and  fast tensor convolution strategies, the convolution operation in Eq. \eqref{convolution_ssm} can be computed in reduced time complexity $O(LN\max \{H, \log L\})$ from $O(LNH\max\{L, N\})$.% by Eq. \eqref{convolution_fft}. %with parallelizable logarithmic depth.
\end{proposition}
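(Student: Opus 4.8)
The plan is to bound the cost of evaluating $x = K \ast u$ under the proposed scheme by splitting it into three independent stages and then certifying that their combined cost collapses to the stated $\max$. Concretely, I would (i) preprocess the input into $\bar{B}u$, (ii) form the diagonal system kernel $V$, and (iii) run the disentangled FFT convolution of Eq. \eqref{disent_conv}; the final bound is then the dominating stage cost. For the comparison term I would separately recall the naive baseline, in which $V$ is full and the convolution is carried out directly.

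First I would account for the preprocessing. By the associativity used in \eqref{rewrite_x1}, we may replace $V\bar{B}\ast u$ with $V\ast(\bar{B}u)$, so it suffices to compute $(\bar{B}u)_{k}=\bar{B}u_k\in\mathbb{R}^N$ once for each of the $L$ time steps. Each such matrix--vector product costs $O(NH)$, giving $O(LNH)$ for stage (i). Next, invoking Lemma \ref{thm:diagonalization}, after diagonalization $\bar{\Lambda}^{i}=\mathrm{diag}\{e^{i\lambda_1\Delta},\dots,e^{i\lambda_N\Delta}\}$, so each of the $L$ powers is obtained by $O(N)$ scalar exponentiations rather than an $O(N^3)$ dense matrix power; this is exactly the $O(LN)$ cost of stage (ii) already listed after the lemma.

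The heart of the argument is stage (iii). I would justify the disentanglement \eqref{daig_v}: because each $V_{m::}$ is diagonal, the matrix--vector convolution $V\ast(\bar{B}u)$ does not mix coordinates, so it factorizes exactly into the $N$ scalar convolutions $x_{:i}=V_{:ii}\ast(\bar{B}u)_{:i}$, one per state coordinate. Each of these is a convolution of two length-$L$ univariate sequences, which by the convolution theorem \eqref{convolution_fft} is evaluated in $O(L\log L)$ through FFT/iFFT; summing over the $N$ coordinates yields $O(NL\log L)$. Adding the three stages gives $O(LNH)+O(LN)+O(NL\log L)$, where the middle term is dominated, so the total is $O\!\left(LN(H+\log L)\right)=O(LN\max\{H,\log L\})$, as claimed. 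For the baseline I would note that without diagonalization the kernel must be built by the running recursion $K_m=\bar{A}K_{m-1}$ at $O(N^2H)$ per step, i.e.\ $O(LN^2H)$, while the direct tensor convolution costs $O(L^2NH)$; their maximum is $O(LNH\max\{L,N\})$, and dividing by the new bound gives the stated speedup.

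I expect the main obstacle to be the rigorous justification of the disentanglement rather than the arithmetic: one must verify that the diagonality of $V_{m::}$ is precisely what decouples the $N$-dimensional convolution into $N$ scalar ones in \eqref{daig_v}, and then confirm that the three stage costs combine into a single clean $\max\{H,\log L\}$ factor, in particular that the $O(LN)$ kernel cost never dominates and that the $H$ and $\log L$ terms land in the correct slots. Some care is also needed so that the baseline accounting reproduces $O(LNH\max\{L,N\})$ in both regimes $L\ge N$ and $N>L$.
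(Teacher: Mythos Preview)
Your proposal is correct and follows essentially the same approach as the paper's own proof: both account for the baseline as $O(LN^2H)$ for building $K$ plus $O(L^2NH)$ for the direct convolution, and for the fast scheme as $O(LNH)$ for $\bar{B}u$, $O(LN)$ for $V$, and $O(NL\log L)$ for the FFT-based disentangled convolution. Your write-up is in fact more explicit than the paper's in justifying why the $O(LN)$ term is dominated and how the $\max$ arises from the sum.
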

\begin{proof}
	Given full matrix $A$ and $B$, we need $O(LN^2H)$ time complexity to obtain $K$. The convolution operation between $K$ and $u$ needs $O(L^2NH)$ time complexity. The overall time complexity is $O(LNH\max\{L, N\})$. 
	
	After using the proposed two strategies, we require $O(LNH)$ time complexity for the multiplication of $Bu$, and $O(LN)$ time complexity for $V$. Given $Bu$ and $V$, we need $O(LN\log{L})$ time complexity for Eq. \eqref{disent_conv}. Thus, the overall time complexity is $O(LN\max \{H, \log L\})$.
\end{proof}

\subsection{Efficient Block Diagonal SSM}
We propose a block diagonalization strategy to improve further the model efficiency, which can be regarded as a multi-head eSSM\cite{vaswani2017attention}. Fig. \ref{fig: model} (a) illustrates the structure of Multi-Head eSSM. Multi-head eSSM constructs multi-model for large-sized vectors in multiple subspaces. Specifically, the original input with a dimension of $H$ is first split into $s$ heads with dimension $H/s$. Then, each split sub-vector is modeled by an eSSM.

The multi-head SSM can be seen as a diagonal-block system in view of the dynamical system community. Each coefficient matrix is a diagonal block matrix, and each block on the diagonal is a coefficient of the subsystem (head), as expressed in Eq. \eqref{eq: diagonal cof}. This diagonal-block system is called a parallel-connected system in the dynamical system community. The internal structure of multi-head eSSM with two blocks is schematically illustrated in Fig \ref{InternalStructureeSSM} (b).

\begin{equation}\label{eq: diagonal cof}
	\begin{split}
	\Lambda=\begin{bmatrix}
		\Lambda_1 &  &  \\
		& \ddots  &  \\
		&  & \Lambda_s
	\end{bmatrix},
	B=\begin{bmatrix}
		B_1 &  &  \\
		& \ddots  &  \\
		&  & B_s
	\end{bmatrix},
	C=\begin{bmatrix}
		C_1 &  &  \\
		& \ddots  &  \\
		&  & C_s
	\end{bmatrix},
	D=\begin{bmatrix}
		D_1 &  &  \\
		& \ddots  &  \\
		&  & D_s
	\end{bmatrix}
	\end{split}
\end{equation}

In order to mix the features from different head, the output of all heads are concatenated and linear projected as the output of Multi-Head eSSM.

\begin{equation}
	\begin{split}
		\text{Multi-Head eSSM}(u) = & W\cdot Concat(eSSM_1(u_1), 
		..., eSSM_h(u_s))+b
	\end{split}
\end{equation}
where $W \in \mathbb{R}^{M\times N}$, and $b \in  \mathbb{R}^{M}$.

The Multi-Head setting increases optionality and diversity in model selection with a smaller size. The elements on the off-diagonal are all zero. Therefore, more heads result in fewer parameters with the same dimension $H$. With the same input, latent state, and output size, the multi-head eSSM would be more parameter efficient than the original SSM with MIMO, like S5. Moreover, there is an extreme case where the original multivariate sequence is split into $H$ univariate sequences, $s=H$. In this case, the Multi-Head eSSM becomes stacked SISO SSM, the base model in S4 and DSS. We experimentally find that increasing the number of heads can accelerate training convergence and reduce over-fitting.

\subsection{Efficient Bidirectional non-causal SSM}
A causal system is a system that generates the outputs by only taking the past and present variables as input. With this, we shall obtain the following proposition.

\begin{proposition}\label{prop_causal}
	Linear SSM is a causal system.
\end{proposition}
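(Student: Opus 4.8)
The plan is to show directly from the recurrent representation—equivalently, from the convolutional one—that the output $y_k$ at any time step $k$ is a function only of the inputs $u_1,\ldots,u_k$ together with the fixed initial state $x_0$, and carries no dependence on any future input $u_j$ with $j>k$. Recalling the definition of a causal system stated immediately before the proposition, this is precisely what must be verified.

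First I would invoke the unrolled recursion already derived for the state,
\[
x_k = \bar{A}^k x_0 + \bar{A}^{k-1}\bar{B}u_1 + \cdots + \bar{B}u_k,
\]
equivalently the one-sided convolution $x_k=\sum_{m=1}^{k} V_{m::}\cdot(\bar{B}u)_{(k-m):}$ from the preceding subsection. The essential observation is that the summation index runs only up to $m=k$, so that only the samples $u_1,\ldots,u_k$ ever appear and no term containing a future input $u_j$ with $j>k$ is generated. A clean way to present this is by induction on $k$: the base case $x_1=\bar{A}x_0+\bar{B}u_1$ depends only on $u_1$, and if $x_{k-1}$ depends only on $u_1,\ldots,u_{k-1}$, then $x_k=\bar{A}x_{k-1}+\bar{B}u_k$ depends only on $u_1,\ldots,u_k$.

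Once the state is known to be a function of past and present inputs alone, I would close the argument with the output equation $y_k=\bar{C}x_k+\bar{D}u_k$. Since $x_k$ depends only on $\{u_j:j\le k\}$ and the direct-feedthrough term $\bar{D}u_k$ involves only the present input, $y_k$ inherits dependence on $\{u_j:j\le k\}$ only. By the definition of causality, this establishes the claim.

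The argument is structural, so there is no genuinely hard computation; the only point requiring care is the role of the bias term $\bar{A}^k x_0$. I would emphasise that $x_0$ is a fixed initial condition (indeed taken to be zero in this work), not a future input, so its appearance does not violate causality. Making the definition precise enough that ``depends only on past and present inputs'' excludes the initial state from the set of inputs is the one place where the statement could otherwise be misread, and I expect that to be the main subtlety rather than any technical obstacle.
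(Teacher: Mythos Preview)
Your argument is correct: unrolling the recursion (or, equivalently, reading off the one-sided convolution sum) shows that $y_k$ is built only from $u_1,\ldots,u_k$ and the fixed initial state, which is exactly the definition of causality quoted before the proposition. The induction and the handling of the bias term $\bar{A}^k x_0$ are fine.

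The paper, however, takes a slightly different and more compressed route. Rather than unrolling and arguing inputwise, it observes that the discrete SSM is linear and shift-invariant, and then invokes the standard characterisation that an LSI system is causal if and only if its convolution kernel satisfies $K(t)=0$ for $t<0$. Since the state kernel $K$ in Eq.~\eqref{state_kernel} is defined only for nonnegative indices, the criterion is met immediately. Your approach is more elementary and self-contained---it works straight from the definition without appealing to the LSI causality criterion---while the paper's proof is a two-line appeal to a known signal-processing fact. Both reach the same conclusion; yours would be preferable for a reader unfamiliar with the impulse-response characterisation, whereas the paper's is terser for a reader who already has that tool.
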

\begin{proof}
	The discrete SSM is linear shift-invariant, and $K(t)=0$ if $t<0$. According to the definition of causal system, linear SSM is thus causal systems.
\end{proof}
As Proposition \ref{prop_causal} says, the SSM is causal model with a forward causal kernel, noted as $\overrightarrow{K}$, which only considers the historical and current sequences. Sometimes, we need to consider the whole sequence under a non-causal setting. For example, we need a non-causal model to model images. We propose the bidirectional non-causal eSSM by adding a backward kernel $\overleftarrow{K}$, which is obtained by reversing $\overrightarrow{K}$. According to Eq. \eqref{rewrite_x1}, we could assume that the forward and backward processes have the same parameter $B$ but different $V$. Thus, $\overleftrightarrow{K}$ is further reformulated as \eqref{bidirc_kernel} with forward system kernel $\overrightarrow{V}$ and backward system kernel $\overleftarrow{V}$.
\begin{align}\label{bidirc_kernel}
	\overleftrightarrow{K} = (\overrightarrow{K} + \overleftarrow{K} )= (\overrightarrow{V} + \overleftarrow{V})\bar{B}
\end{align}

The advantage of this setting that we obtian a bidirectional non-causal model without introducing additional parameters.

\subsection{Parameterization and Initialization of eSSM}
\label{para_initial}
The diagonal SSM have learnable parameters $\Lambda, B, C, D$, and a time step $\Delta$ for discretization. We introduce the parameterization and initialization of these parameters, respectively.

\textbf{Parameter $\Lambda$}. According to Proposition \ref{thm:state_convergence}, we know that all elements in $\Lambda$ must have negative real parts to ensure state convergence. Thus, we restrict $\Lambda$ with an enforcing function $f_+$, expressed as $-f_+(Re(\Lambda))+Im(\Lambda)i$, where $Re(\cdot)$ and $Im(\cdot)$ denote real and imaginary parts. The enforcing function $f_+$ outputs positive real numbers and may have many forms, such as the Gaussian function. In this work, the clipping function with maximum value 1e-3 is use. Eigenvalues of the HiPPO matrix, Eq \eqref{A-normal}, are used to initialize $\Lambda$.

\begin{align}\label{A-normal}
	\mathbf{A}^{\mathrm{Normal}} &= 
	-\begin{cases}
		(n+\frac{1}{2})^{1/2}(k+\frac{1}{2})^{1/2}, & n>k\\
		\frac{1}{2}, & n=k\\
		(n+\frac{1}{2})^{1/2}(k+\frac{1}{2})^{1/2}, & n < k
	\end{cases}.
\end{align}

\textbf{Parameter $B$ and $C$}. $B$ and $C$ are the parameters of the linear projection function. We parameterize them as learnable full matrices.

\textbf{Parameter $D$}. $D$ is parameterized as a trainable diagonal matrix, initialized by a constant 1.

\textbf{Parameter $\Delta$}. We relax it to $\Delta \in \mathbb{R}^{N}$, set it as a learnable parameter, and initialize it by randomly sampling from a bounded interval.

We give more details about parameterization and initialization in Appendix \ref{eSSM-app-para_initia}. eSSM with 1 head theoretically has the same parameter amount as S5 because they all build on MIMO SSM. However, our eSSM has fewer parameters because of the following reasons. One is that the block diagonalization strategy reduces the model parameters; another is that the bidirectional mechanism in eSSM does not add additional parameters, but S5 increases the parameter by an additional $C$; the eSSM parameters $B$ and $C$ are all real numbers, and S5 uses complex numbers, which double the parameter amount. A detailed structure comparison between eSSM and S4, S5 is given in Appendix \ref{eSSM-app-comparison}.

% This work found that $h=4$ is an appropriate choice for the default setting. 
\begin{figure}[!t]
	\centering
	\includegraphics[width=2.5in]{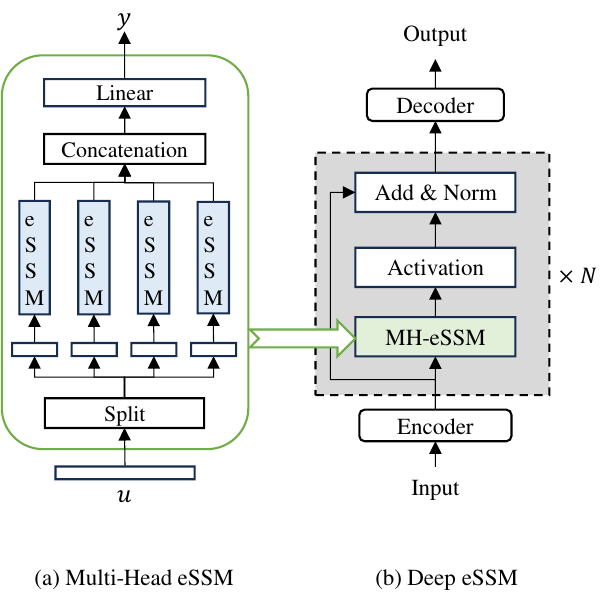}
	\caption{Architecture of deep eSSM.}\label{fig: model}
\end{figure}

\subsection{Deep eSSM}
\label{eSSM-app-deep}
The linear dynamic model, together with the activation function, can serve as a new type of neural network for addressing challenges in long-sequence modeling. We name it as Efficient State Space Model, abbreated as eSSM. This section introduces the primary settings.

The Multi-Head eSSM is stacked into multiple layers to construct the deep model for sequence modeling tasks. As shown in Fig. \ref{fig: model} (a), the activation function is applied to increase the nonlinear ability of the deep model. In addition, there is a residual connection between the Multi-Head eSSM input and the activation function output. We add (layer or batch) normalization at the end or beginning of each module.

The input sequence with length $L$ is first transformed into a sequence with a $(H, L)$ shape. Then, the proposed deep eSSM takes the encoded sequence as input and outputs a sequence with the shape of $(M, L)$. Throughout this work, we set $M=H$. Eventually, the decoder transforms the output sequence into a vector for classification. The encoder and decoder are embedding layers or linear projections.
\section{Experiments}
\label{essm sec: Experiments}

\subsection{An Illustrative Example}
In this section, we want to use a toy example to experimentally explain our proposed  Theorem \ref{thm:state_convergence} and Proposition Proposition \ref{thm:diagonalization}. Specifically, we parameterize the continuous SSM, Eq. \eqref{continuous_ssm}, with $A=[[-0.2, 1], [-1, -3]]$, $B = [[1, 0],[0, 1]]$, $C = [[1, 0],[0, 1]]$, and $D = [[0, 0],[0, 0]]$. Input is set as $u = [sin(t), cos(2*t)]^T$.

\subsubsection{Model Equivalence}
We simulate the system for 10 seconds with a time step of $\Delta=0.005$. The initial state is $[0, 0]$. Fig. \ref{essm fig: model equivalence} presents the results of running four different models, the original discrete SSM Eq. \eqref{discret_ssm}, the diagonal SSM Eq. \eqref{diagonal_ssm}, the convolution SSM directly calculated by convolution Eq. \eqref{convolution_ssm}, and the convolution SSM computed using fast tensor convolution  Eq. \eqref{convolution_fft}. The identical results of these models in Fig. \ref{essm fig: model equivalence} demonstrate their transformation equivalence.

\begin{figure}
	\centering
	\includegraphics[width=0.4\textwidth]{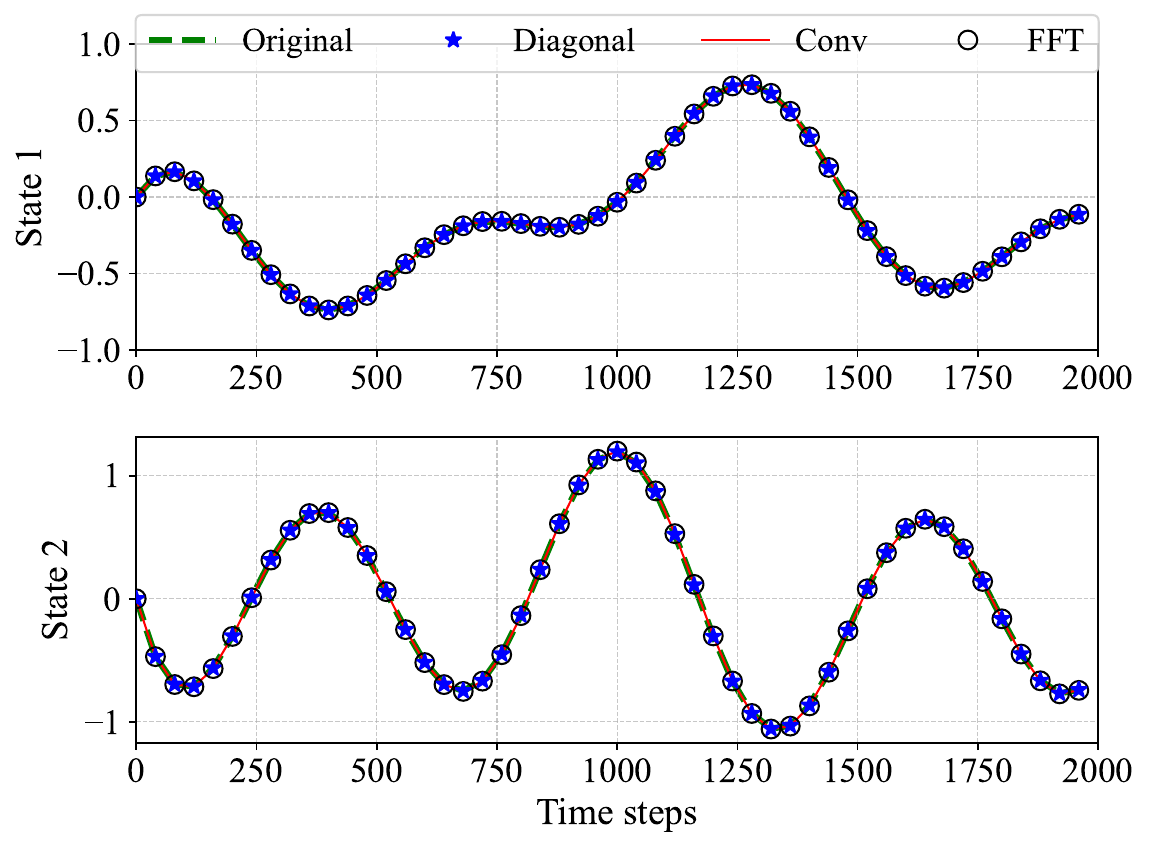}    
	\caption{Model transformation equivalence. The blue dashed line represents the original discrete SSM (Eq. \eqref{discret_ssm}), the blue asterisks represent the diagonal SSM (Eq. \eqref{diagonal_ssm}), the red solid line labeled ‘Conv’ denotes the convolution SSM directly computed using convolution (Eq. \eqref{convolution_ssm}), and the circle labeled ‘FFT’ corresponds to the convolution SSM computed via fast tensor convolution (Eq. \eqref{convolution_fft}). All models yield identical results.}
	\label{essm fig: model equivalence}
	\vspace{-5pt}
\end{figure}

\subsubsection{State Convergence}
The convolution SSM Eq. \eqref{convolution_ssm} omits the bias $\bar{A}^{k}x_{0}$ caused by unknown initial state. In the experiment, we set the initial state of the convolution SSM, noted as 'biased eSSM',  to $[0,0]^T$, while the initial state of the real system, Eq. \eqref{convolution_ssm}, is $[1,0]^T$. As shown in Fig. \ref{essm fig: state convergence}, the initial value of state 1 of the biased eSSM violates the original system.
Thanks to the convergence proposition \ref{thm:state_convergence}, the state bias will gradually decay to 0 over time. As shown in the figure, the state value of the biased eSSM gradually converges to the state value of the original system.

\begin{figure}
	\centering
	\includegraphics[width=0.4\textwidth]{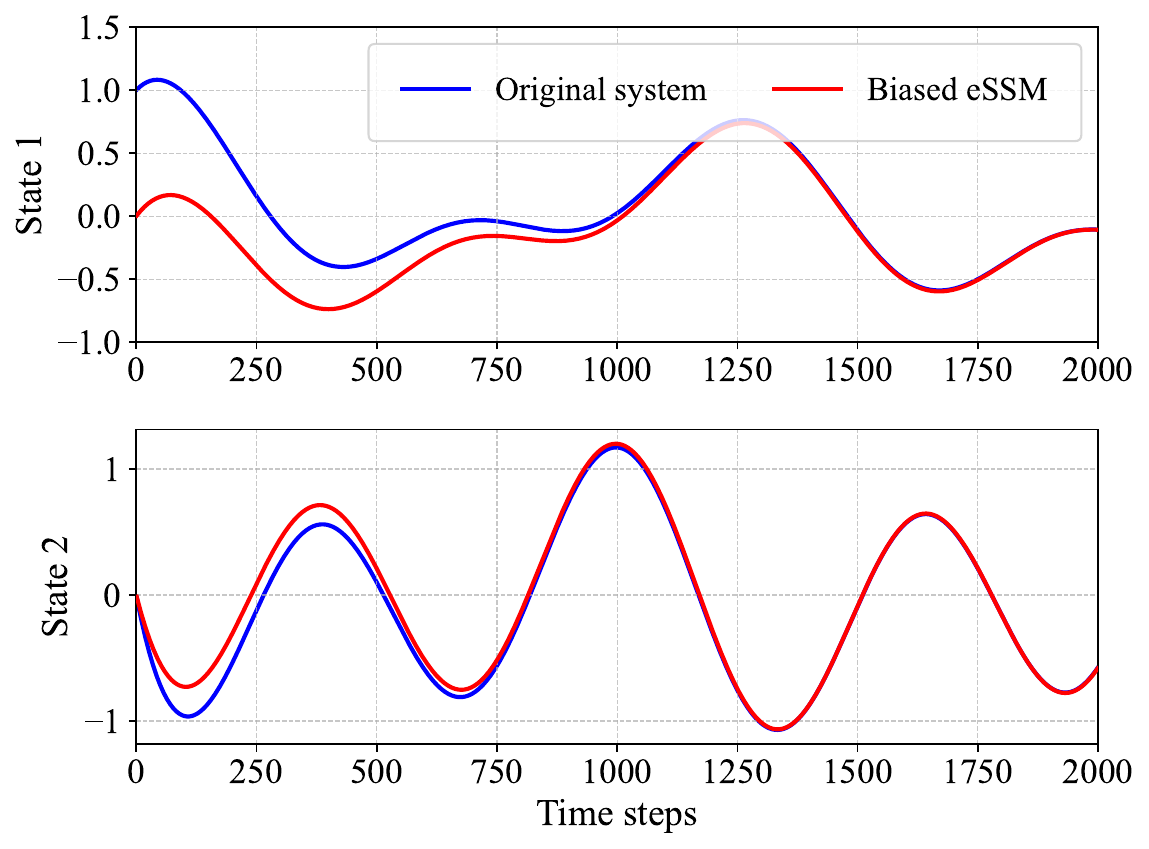}    
	\caption{State convergence. The convolutional SSM with biased initial state in red line, labeled as 'Biased eSSM' would converge to the true state of original system in blue line over time.}
	\label{essm fig: state convergence}
	\vspace{-5pt}
\end{figure}

\subsection{Experimental Settings}
\subsubsection{Datasets}
We measure the performance of the proposed eSSM on a wide range of sequence-level classification tasks, including audio, text, and images.
Specifically, Long Range Arena (LRA) \cite{tay2020long} is a challenging and standard benchmark for evaluating the models' long sequence classification performance. There are six tasks in LRA.
\begin{itemize}
	\item \textbf{LISTOPS} \cite{nangia2018listops}: It is a ten-way classification task containing a hierarchical structure and operators, and the sequence length is set as 2000.
	\item \textbf{Text}: The IMDB reviewers dataset \cite{maas2011learning} is processed and used for binary byte-level text classification with a sequence of 4096.
	\item \textbf{Retrieval}: This task measures the similarity between two sequence lengths 4000 based on the AAN dataset \cite{radev2013acl}.
	\item \textbf{Image}: $32\times32$ Images in CIFAR \cite{krizhevsky2009learning} are flattened into a sequence of length 1024 for classification tasks.
	\item \textbf{Pathfinder} \cite{linsley2018learning}: The goal is to judge if two points are connected in a $32\times32$ image. Images are flattened into a sequence of length 1024.
	\item \textbf{PathX}: It is an extreme case of Pathfinder with $128\times128$ images.
\end{itemize}

Please refer to Appendix \ref{eSSM-app-configs} for a more detailed description, including an introduction to Raw Speech Commands \cite{warden2018speech} and Pixel-level 1-D Image Classification. 

\subsubsection{Baseline Methods}
We compare the results of our proposed model with SOTA models. Those models include a diverse cross-section of long-sequence models, for example:
\begin{itemize}
	\item Transformer family: the standard vanilla Transformer \cite{vaswani2017attention}; three efficient variants, Reformer \cite{kitaev2020reformer}, Performer \cite{choromanski2020rethinking}, and Luna-256 \cite{ma2021luna}.
	\item F-net \cite{lee2021fnet}: a FFT based attention-free model.
	\item CDIL-CNN \cite{cheng2023classification}: a multi-scale model using Circular Dilated CNN.
	\item SSMs family: S4 \cite{gu2021efficiently},  S5 \cite{smith2022simplified}.
\end{itemize}

For a fair comparison, all results of baseline models are directly reused as reported in \cite{tay2020long, gu2021efficiently, cheng2023classification}.

\subsubsection{Evaluation metrics}
The performance of each model is evaluated by accuracy, which is expressed as follows
\begin{equation}
    \label{essm accuracy}
    \text{Accuracy} = \frac{1}{N}\sum_{i}^{N} 1\left( {y_i=\hat{y}_i} \right)
\end{equation}
where $N$ is the number of samples, $y_i$ is the target values, and $\hat{y}_i$ is the model predictions.

\subsubsection{Hyperparameter}
We discretize all continuous SSMs using ZOH. Key hyperparameters of all our experiments are given in Table \ref{tab: hyper_para} in Appendix \ref{eSSM-app-configs}.

\subsection{Efficiency Benchmarks}
We use the IMDB dataset with batch size 16, to compare and explore the efficiency of different models, including Transformer, LSTM, Mamba, etc. All models and experiments run 10 epochs, with the number of model parameters and average training time per epoch as evaluation metrics.

\subsubsection{Scaling Input Size}
We scale different input sizes, i.e., $H = \{64, 128, 256\}$, with a sequence length of 4096, and compare different model parameter sizes and average training time per epoch.
As shown in Fig.\ref{essm-fig: efficient comparison} (a), our eSSM has the least parameters under all input sizes. In particular, as the input size increases, the advantage of eSSM becomes more significant.
For instance, when the input size is 256, our model's parameter size is only 8.59\% of Transformer, 13.24\% of Mamba, and 40.84\% of S4, clearly demonstrating its efficiency.

Fig. \ref {essm-fig: efficient comparison} (b) presents the training time advantage of our eSSM model over other models at different input sizes. Our eSSM consistently requires less training time, particularly when the input size is 256, making it 3.94 times faster than LSTM, 2.14 times faster than S4, and 1.35 times faster than Mamba.

\subsubsection{Scaling Sequence Length}
We scale different input sequence lengths from 256 to 4096 with input size 256.
Fig.\ref{essm-fig: efficient comparison} (c) shows the training time of different models as the sequence length increases. The training time of LSTM and Transformer increases significantly with the increase of sequence length. However, the training time of SSM-based models increases slowly with the increase of sequence length, among which eSSM performs the best.

\subsubsection{Bidirectional Kernel Setting}
Fig. \ref{essm fig: kernel_comparison} compares the parameters of the three models, eSSM, S4, and DSS with and without bidirectional kernels. It can be seen that S4 and DSS will introduce additional parameters when using bidirectional kernels, while eSSM will not.
\begin{figure}
	\centering
	\includegraphics[width=0.35\textwidth]{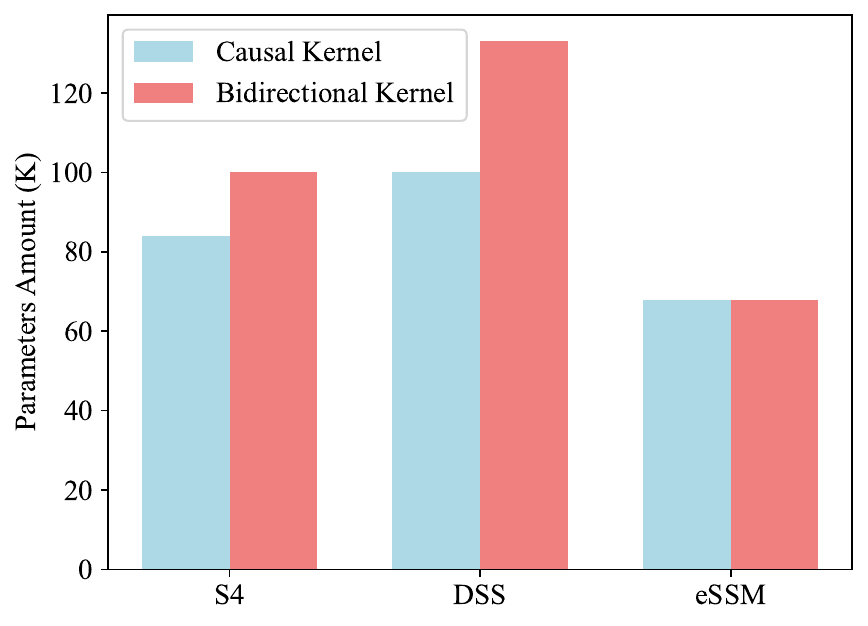}    
	\caption{Ablation study on bidirectonal kernel setting in different SMMs.}
	\label{essm fig: kernel_comparison}
	\vspace{-5pt}
\end{figure}
% \begin{figure}
% 	\centering
% 	\includegraphics[width=0.45\textwidth]{figs/essm_fig_input_size_comparison.pdf}    
% 	\caption{Model parameter comparison with different input size.}
% 	\label{fig: essm_input_size_comparison}
% 	\vspace{-5pt}
% \end{figure}

% \begin{figure}
% 	\centering
% 	\includegraphics[width=0.45\textwidth]{figs/essm_figinput_size_time_comparison.pdf}    
% 	\caption{Training time per epoch of models with different input size.}
% 	\label{fig: essm_input_size_time_comparison}
% 	\vspace{-5pt}
% \end{figure}

% \begin{figure}
% 	\centering
% 	\includegraphics[width=0.45\textwidth]{figs/essm_input_length_time_comparison.pdf}    
% 	\caption{Training time per epoch of models with different input length.}
% 	\label{fig: essm_input_length_time_comparison}
% 	\vspace{-5pt}
% \end{figure}

\subsection{Results on Long Sequence Modeling}
% \subsection{Experimental Setup}
\begin{table*}[!b]
	\centering
	\caption{Results on the LRA benchmark tasks. The best and second best results are highlighted in bold and underlined respectively.}
	\resizebox{0.7\textwidth}{!}{%
		\begin{tabular}{cccccccc}
			\hline
			Model        & ListOps & Text  & Retrieval & Image & Pathfinder & Path-X & Avg.  \\
			(Length)       & 2,000    & 4,096  & 4,000      & 1,024  & 1,024       & 16,384  & -     \\ \hline
			Transformer \cite{vaswani2017attention}  & 36.37   & 64.27 & 57.46     & 42.44 & 71.40      & -      & 53.66 \\
			Reformer \cite{kitaev2020reformer}     	 & 37.27   & 56.10 & 53.40     & 38.07 & 68.50      & -      & 50.56 \\
			Performer \cite{choromanski2020rethinking}& 18.01   & 65.40 & 53.82     & 42.77 & 77.05      & -      & 51.18 \\ 
			Luna-256 \cite{ma2021luna}     			 & 37.25   & 64.57 & 79.29     & 47.38 & 77.72      & -      & 59.37 \\ \hline
			FNet \cite{lee2021fnet}        			 & 35.33   & 65.11 & 59.61     & 38.67 & 77.80      & -      & 54.42 \\ 
			CDIL-CNN \cite{cheng2023classification}  & 60.60   & 87.62 & 84.27     & 64.49 & 91.00      & -      & 77.59 \\ \hline
			S4 \cite{gu2021efficiently}          	 & 59.60   & 86.82 & \underline{90.90}     & \textbf{88.65} & \underline{94.2}      & \underline{96.35}  & \underline{86.09} \\
			DSS	\cite{gupta2022diagonal}			 & 60.6    & 84.8  & 87.8      & 85.7  & 84.6       & 87.8   & 81.88 \\
			S4D \cite{gu2022parameterization}        & 60.47   & 86.18 & 89.46     & \underline{88.19} & 93.06      & 91.95  & 84.89 \\
			S5 \cite{smith2022simplified}            & \underline{62.15}   & \textbf{89.31} & \textbf{91.40}     & 88.00 & \textbf{95.33}      & \textbf{98.58}  & \textbf{87.46} \\ \hline
			eSSM        							 & \textbf{62.20}  & \underline{88.25} & 90.15     & 87.25 & 93.87      &      92.76     &  85.75 \\ \hline
		\end{tabular}%
	}
 \label{essm tab:LRA_all}
\end{table*}

Table \ref{essm tab:LRA_all} presents the results of eSSM on the LRA benchmark. Our model beats the Transformer family models and the F-net in all tasks. Furthermore, SSM family modes, like S4 and S5, are state-of-the-art (SOTA) models in LRA. The overall results show that our method achieves a matchable performance to those SOTA models. Specifically, the eSSM obtains the best performance in ListOps and the second-best accuracy in Text.

\begin{comment}
\begin{table}[]
	\centering
	\caption{Model size on the LRA benchmark tasks.}
	\label{tab: model size}
	\resizebox{\columnwidth}{!}{%
		\begin{tabular}{ccccccc}
			\hline
			Model        & ListOps & Text  & Retrieval & Image & Pathfinder & Path-X  \\ \hline
			S4 \cite{gu2021efficiently}          	 & 420K   & 448K & 701K     & 2.0M & 600K      & 797K   \\
			S5 \cite{smith2022simplified}            & 62.15   & 89.31 &      & \textbf{88.00} & \textbf{95.33}       & 87.46 \\ \hline
			eSSM    & \textbf{607K}  & \textbf{837K} & \textbf{284K}     & \textbf{830K} & \textbf{527K}      &      \textbf{211K}       \\ \hline
		\end{tabular}%
	}
\end{table}
\end{comment}

\subsection{Result on Raw Speech Classification}
The raw speech classification is a 35-word audio dataset collected from real-world \cite{warden2018speech}. Table \ref{essm tab: speech 10} reports the 10-way classification results for a subset of the original dataset. eSSM is better than S4 and slightly worse than CKConv in classifying using MFCC features, which are processed based on the original sampled audio sequence. However, when modeling raw speech with a length of 16,000, eSSM significantly outperforms CKConv. Therefore, eSSM has strong capabilities in audio sequence modeling.
Results on 35-way classification are provided in Appendix \ref{eSSM-app-sup-result}.

\subsection{Zero-shot Learning on Raw Speech Classification}
In order to test the generalization performance of eSSM, following S4, we directly apply the model pre-trained on raw audio (sample at 16 KHz) to classify speech data at different sampling frequencies (8 KHz). Since the sampling frequency is reduced from 16 KHz to 8 KHz, we rescale the time step $\Delta$ in the pre-trained model to twice the original value and all other model parameters remain unchanged. As can be seen from Table \ref{essm tab: speech 10}, eSSM has good performance and achieved the second-best accuracy after S4.

\begin{table}[!ht]
	\centering
	\caption{Test accuracy on 10-way Speech Commands classification task. The best and second best results are highlighted in bold and underlined respectively.}
	\begin{tabular}{cccc}
		\hline
		Model        & MFCC  &  16kHz    & 8kHz  \\
		(Length)     & (784) & (16,000) & (8,000) \\ \hline
		Transformer  \cite{trinh2018learning, vaswani2017attention}  & 90.75 & -        & -       \\
		Performer \cite{choromanski2020rethinking}   & 80.85  & 30.77   & 30.68    \\
		ODE-RNN \cite{rubanova2019latent}     & 65.9  & -        & -       \\
		NRDE  \cite{kidger2020neural}        & 89.8  & 16.49    & 15.12   \\
		ExpRNN \cite{lezcano2019cheap}      & 82.13 & 11.6     & 10.8    \\
		LipschitzRNN \cite{erichson2020lipschitz}& 88.38 & -        & -       \\
		CKConv  \cite{romero2021ckconv}        & \textbf{95.3}  & 71.66    & 65.96   \\
		WaveGAN-D \cite{donahue2018adversarial}   & -     & 96.25    & -       \\ \hline
		LSSL  \cite{gu2021combining}       & 93.58 & -        & -       \\
		S4  \cite{gu2021efficiently}           & 93.96 & \textbf{98.32}    & \textbf{96.30}   \\ \hline
		eSSM         & \underline{94.46} &\underline{97.59}    & \underline{94.23}   \\ \hline
	\end{tabular}
 \label{essm tab: speech 10}
\end{table}

\subsection{Result on Pixel-level 1-D Image Classification}
More results on pixel-level 1-D image classification are given in Appendix \ref{eSSM-app-sup-result}, where the eSSM achieves comparable perfoemance as SOTA SSM-based models, and significantly better than transformer and RNN models.

\subsection{Ablations}
We evaluate how significantly the proposed strategies can improve the efficiency of the original MIMO SSM (noted as vanilla) in terms of fast tensor convolution and block diagonalization. We use IMDB and set the number of SSM layers to 1.

\subsubsection{Efficient Ablation on Diagonalization and Fast Tensor Convolution}
\begin{figure*}[htbp]
    \centering
    \includegraphics[width=0.68\textwidth]{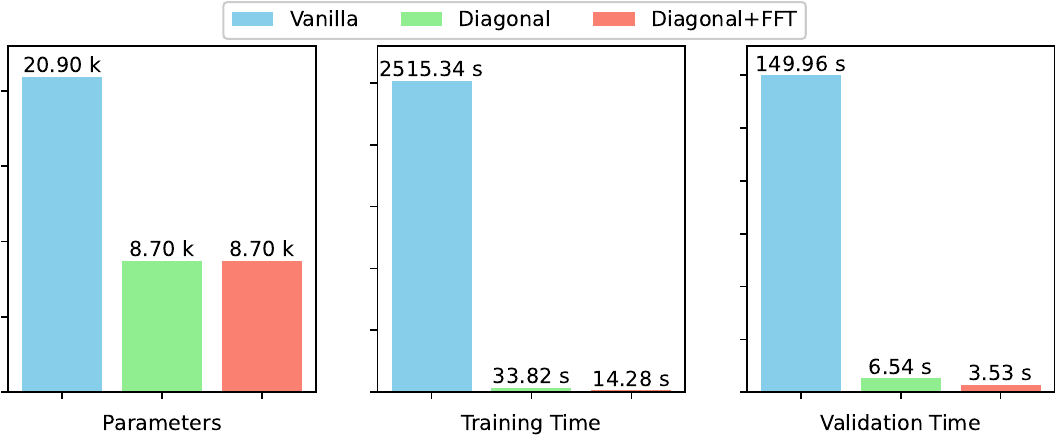}
    \caption{Results of ablation study on model efficiency. 'Vanilla' in blue is the original SSM with the full matrix $A$. 'Diagonal' in green denotes the diagonal SSM. 'Diagonal+FFT' in red denotes the diagonal SSM with FFT.}
    \label{fig:ablation}
\end{figure*}

The size of input, output, and state are all 64, that is, $H=M=N=64$. All models are trained in 10 epochs. The parameter amount, training time (per epoch), and validation (inference) time (per epoch) are used as efficiency metrics.

As shown in the bar chart in Fig. \ref{fig:ablation}, the vanilla SSM, abbreviated as 'Vanilla', is MIMO SSM with full system matrix $A$ and computed in recurrent mode. It has the worst performance in terms of time and space efficiency. After applying the diagonalization strategy, noted as 'Diagogal', model efficiency is improved significantly, with parameters reduced from 20.90k to 8.70k, training speed increased by 74.37 times, and inference speed increased by 22.93 times. Furthermore, when the fast tensor convolution is utilized, the training and inference speed can be improved further. Therefore, the proposed strategies effectively improve model efficiency with smaller model sizes and less training/inference time.

% \begin{figure*}
% 	\centering
% 	\includegraphics[width=0.55\textwidth]{figs/essm_fig_model_efficiency_comparison.pdf}    
% 	\caption{Results of ablation study on model efficiency. 'Vanilla' in blue is the original SSM with the full matrix $A$. 'Diagonal' in green denotes the diagonal SSM. 'Diagonal+FFT' in red denotes the diagonal SSM with FFT.}
% 	\label{fig:ablation}
% 	\vspace{-5pt}
% \end{figure*}

\subsubsection{Efficient Ablation on Block Diagonalization}
Table \ref{tab: ab_head} lists the results of the parameters and training time of eSSM under different block diagonalization (multi-head) configurations. The results show that increasing the number of heads can reduce the parameters of the model. In addition, under different sequence lengths, the training speed of the model can be significantly improved.

% It can be concluded theoretically and empirically that our multi-head eSSM model is more parameter-efficient with matchable time complexity than other SSM-based models.
\begin{table}[]
\centering
\caption{Ablation on model block diagonalization. Model effciency is improved with the increase of head.}
\label{tab: ab_head}
\begin{tabular}{cccccc}
\hline
\multirow{2}{*}{Head} & \multirow{2}{*}{Parameter (k)} & \multicolumn{4}{c}{Training time}  \\ \cline{3-6} 
                      &                       & 512  & 1024 & 2048 & 4096 \\ \hline
4                     & 75.5                  & 42.3 & 41.6 & 47.1 & 73.8 \\
16                    & 69.3                  & 41.8 & 44.1 & 48.1 & 77.0 \\
64                    & 67.8                  & 40.3 & 41.6 & 44.4 & 62.1 \\ \hline
\end{tabular}
\end{table}

\subsection{Visualization}
To further understand the working mechanism of eSSM, we visualized the key parameters of the model, including the eigenvalues of the diagonal matrix $\Lambda$ and the discrete time intervals $\Delta$.

Fig. \ref{essm fig: lambda} shows a scatter plot of lambda in the model before and after training on the LRA database. All eigenvalues have negative real parts thanks to the enforcing function, making Proposition \ref{thm:state_convergence} holds. In most tasks, such as text and images, the real part of lambda ranges from 0 to -3. In some tasks, such as listops, a small number of smaller lambda exists in the model.

Fig. \ref{essm fig: dt} depicts a scatter plot of the $\Delta$ of trained model on the LRA dataset. To make the value of $\Delta$ significant, we use its natural logarithm as the x-axis to visualize it. In each task, the value of $\log(\Delta)$ is mainly in the range of 0 to -10, and the values of each layer are relatively similar.

\begin{figure*}
	\centering
	\includegraphics[width=0.9\textwidth]{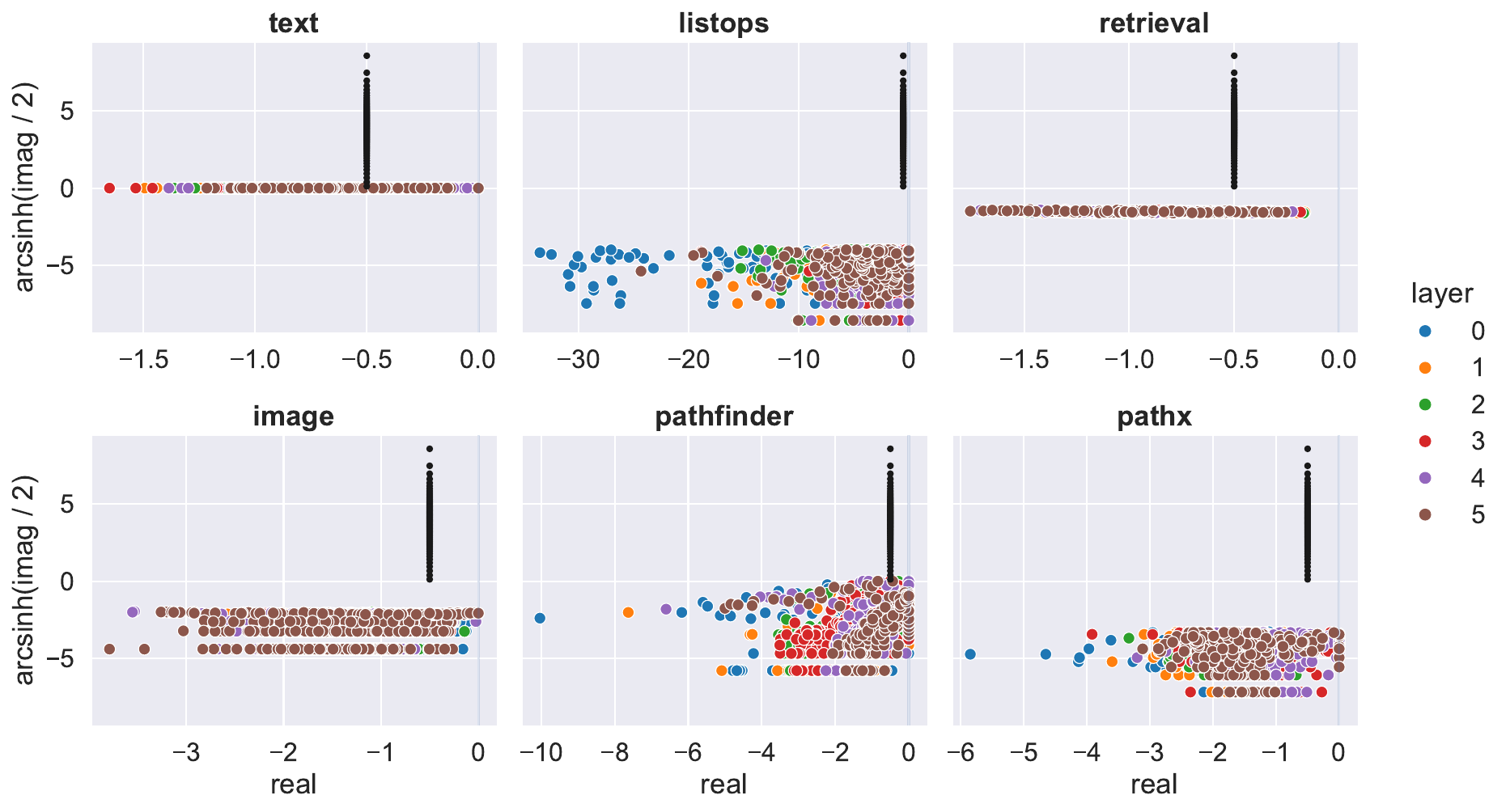}    
	\caption{Visualization of $\Lambda$ in different layers of trained eSSM on LRA. The figure uses real numbers as the x-axis and the $arcsinh$ of the imaginary value as the y-axis.Block dots denotes the Hippo initialization.}
	\label{essm fig: lambda}
	\vspace{-5pt}
\end{figure*}

\begin{figure*}
	\centering
	\includegraphics[width=0.7\textwidth]{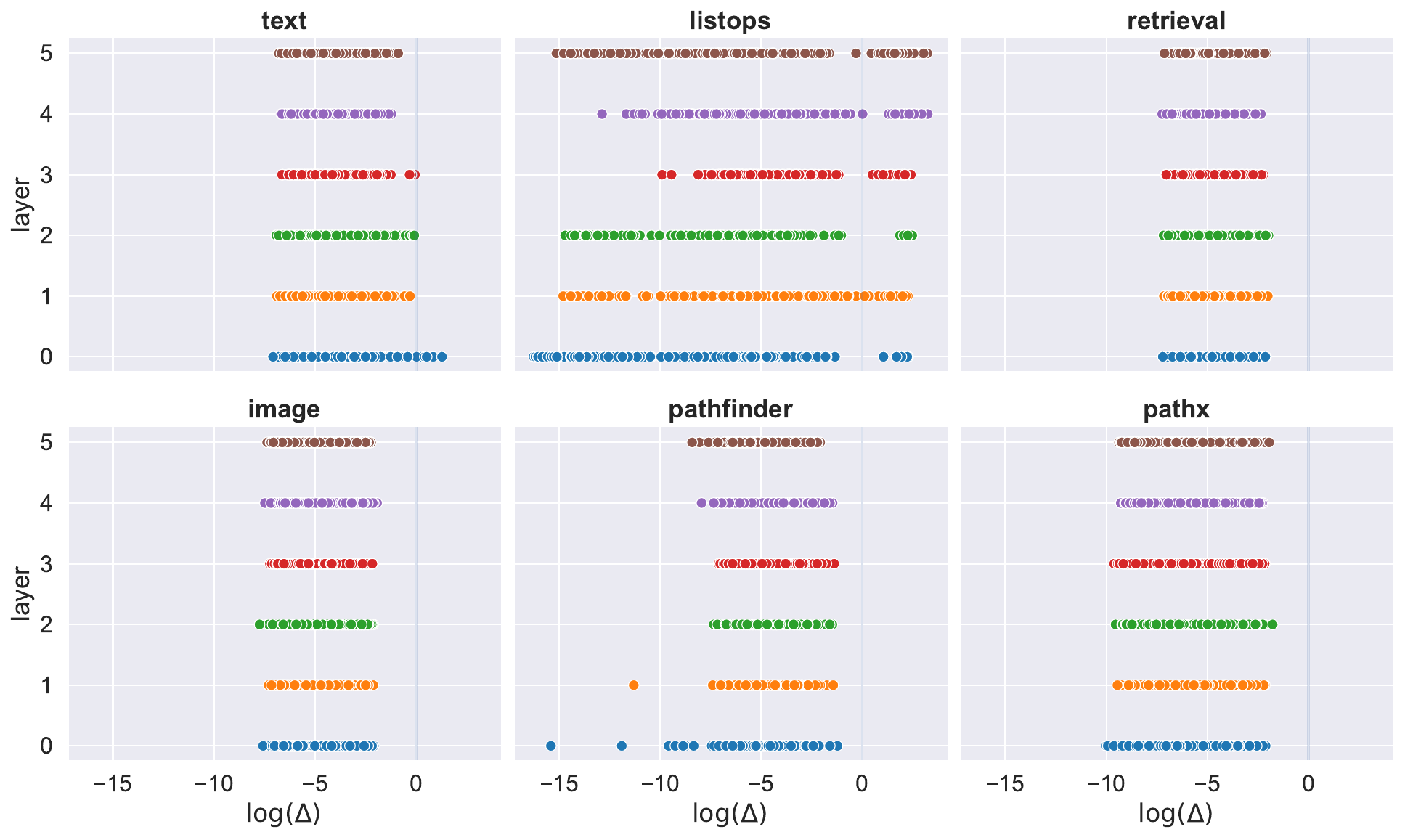}    
	\caption{Visualization of $\Delta$ of trained eSSM on LRA. We use the natural logarithm $\log(\Delta)$ as the x-axis and layers as the y-axis.}
	\label{essm fig: dt}
	\vspace{-5pt}
\end{figure*}
\section{Conclusion}
\label{sec: Conclusion}
This work proposes a new type of neural network for long-sequence modeling called efficient state space model (eSSM). The proposed eSSM builds on state space model (SSM) with multi-input and muti-output (MIMO),  a continuous dynamical model. Besides, the eSSM can be trained efficiently in convolutional representation with the help of strategies, including diagonalization and fast tensor convolution via fast Fourier transform (FFT). Furthermore, the block diagonalization and bidirectional settings improve the diversity and efficiency of the models, enabling them to be applied to various cases and tasks. We stack the eSSM as a deep model on extensive experiments. The results validate that the eSSM has matchable state-of-the-art performance with improved model efficiency. We hope this work can provide an efficient solution for modeling long sequences and provide potential directions for the theoretical design of neural networks via combining control theory.
\clearpage

\appendix
% \title{Appendix for Linear Dynamics-embedded Neural Network for Long-Sequence Modeling}

% \maketitle

% \begin{abstract}
% This appendix provides all necessary materials for the paper 'Linear Dynamics-embedded Neural Network for Long-Sequence Modeling', including model details, experimental configurations, and PyTorch implementation. \footnote{The codes are available at \href{https://github.com/leonty1/essm}{https://github.com/leonty1/essm}}
% \end{abstract}

\begin{center}
    {\Large Appendix for Efficient State Space Model}
\end{center}

\medskip

\section*{Contents:}

%\subsection*{Contents:}
\begin{itemize}
	\item \textbf{Appendix \ref{eSSM-app-notations}}:  Notations.
	\item \textbf{Appendix \ref{eSSM-app-model}}:  Model Details.
	\item \textbf{Appendix \ref{eSSM-app-conv_ssm}}:  Convolutional View of Continuous SSM.
	\item \textbf{Appendix \ref{eSSM-app-discretization}}:  Numerical Discretization.
	%\item \textbf{Appendix \ref{eSSM-app-fft}}:  Fast Fourier Transform.
	\item \textbf{Appendix \ref{eSSM-app-para_initia}}:  Parameterization and Initialization of eSSM.
	\item \textbf{Appendix \ref{eSSM-app-hippo}}:  HiPPO Initialization.
	\item \textbf{Appendix \ref{eSSM-app-comparison}}:  Comparison with Related Model.
	\item \textbf{Appendix \ref{eSSM-app-structure}}:  Structure Comparison of Different SSM.
	\item \textbf{Appendix \ref{eSSM-app-structure}}:  Parameterization and Initialization of SSM.
	\item \textbf{Appendix \ref{eSSM-app-relation}}:  Relationship between eSSM, S4, and S5.
	%\item \textbf{Appendix \ref{eSSM-app-comp_eff}}:  Computational Efficiency Comparison.
	%\item \textbf{Appendix \ref{eSSM-app-ablations}}:  Ablations.
	\item \textbf{Appendix \ref{eSSM-app-sup-result}}:  Supplementary Results.
	\item \textbf{Appendix \ref{eSSM-app-configs}}:  Experimental Configurations for Reproducibility.
	\item \textbf{Appendix \ref{eSSM-app-implementation}}:  PyTorch Implementation of eSSM Layer.
\end{itemize}

\clearpage
\section{Notations}\label{eSSM-app-notations}

\begin{table}[b]
\centering
\caption{Notations and their descriptions }
\begin{tabular}{c|l}
\toprule
\textbf{Notations}                                                  & \textbf{Descriptions}                                                                  		\\ \midrule
SSM                  						& State space model                     	 	\\
$u(t) \in \mathbb{R}^H$   					& System input sequence							\\
$x(t)\in \mathbb{R}^N$						& System state									\\
$y(t)\in \mathbb{R}^M$						& System output sequence						\\
$A \in \mathbb{R}^{N\times N}$              & System matrix in continuous SSM          	\\
$B \in \mathbb{R}^{N\times H}$              & Input matrix in continuous SSM           	\\
$C \in \mathbb{R}^{M\times N}$              & Output matrix in continuous SSM          	\\
$D \in \mathbb{R}^{M\times H}$     			& Direct transition matrix in continuous SSM   \\
$\bar{A} \in \mathbb{R}^{N\times N}$        & System matrix in discrete SSM          		\\
$\bar{B} \in \mathbb{R}^{N\times H}$        & Input matrix in discrete SSM           		\\
$\bar{C} \in \mathbb{R}^{M\times N}$        & Output matrix in discrete SSM          		\\
$\bar{D} \in \mathbb{R}^{M\times H}$     	& Direct transition matrix in discrete SSM  	\\
$\Delta\in \mathbb{R_{+}}$     				& Discrete time step in discrete SSM   		\\
$K$ 										& State kernel in convolutional SSM 			\\
$V$ 										& System kernel in convolutional SSM 			\\
$\Lambda$									& Diagonal system matrix in diagonal SSM						\\
$FFT$                                       & Fast Fourier Transform                                  \\ \bottomrule
\end{tabular}
\end{table}

\clearpage
\section{Model Details}\label{eSSM-app-model}

\subsection{Convolutional View of Continuous SSM} \label{eSSM-app-conv_ssm}
Here, we introduce the convolutional view of continuous SSM \cite{he2023unified}.
\begin{align}
	x(t)&=e^{At}x(0) + \int_{0}^{t}e^{A(t-\tau) }Bu(\tau)d\tau \notag\\
	&=e^{At}x(0)+\int_{0}^{t}e^{At}Bu(t-\tau)d\tau \label{conv_1} \\
	&=e^{At}x(0)+\int_{0}^{t}h(t)u(t-\tau)d\tau \label{conv_2} \\
	&=e^{At}x(0)+ (h\ast u)(t) \label{conv_3}
\end{align}
Using the change of variables, we reformulate Eq. \eqref{conv_1}. Then, let $h(t)=e^{At}B$, we obtain the convolutional SSM \eqref{conv_3} according to the definition of convolution.

\subsection{Numerical Discretization} \label{eSSM-app-discretization}
\subsubsection{Zero-order Hold Method}
The state transition function is an ordinary differential equation (ODE). We can obtain its analytical solution as follows.
\begin{align}
	&\dot{x}(t)=Ax(t) + Bu(t) \notag \\
	&\dot{x}(t)-Ax(t) =Bu(t) \notag\\
	&e^{-tA}\dot{x}(t)-e^{-tA}Ax(t)=e^{-tA}Bu(t) \notag\\
	&\frac{1}{dt}\left[ e^{-tA}x(t)) \right]=e^{-tA}Bu(t) \notag\\
	&\int_{0}^{t}\frac{1}{d\tau}\left[ e^{-\tau A}x(\tau) \right]d\tau=\int_{0}^{t}e^{-\tau A}Bu(\tau)d\tau \notag\\
	&e^{-tA}x(t)-x(0) =\int_{0}^{t}e^{-\tau A}Bu(\tau)d\tau \notag\\
	&x(t)=e^{tA}x(0) + e^{tA}\int_{0}^{t}e^{-\tau A}Bu(\tau)d\tau \notag\\
	&x(t)=e^{At}x(0) + \int_{0}^{t}e^{A(t-\tau) }Bu(\tau)d\tau \label{x_solution}
\end{align}

Eq. \eqref{x_solution} is the analytical solution of $x(t)$. Then, we rewrite Eq. \eqref{x_solution} with initial time $t_0$.
\begin{equation}
	x(t)=e^{A(t-t_0)}x(t_0) + \int_{t_0}^{t}e^{A(t-\tau) }Bu(\tau)d\tau \label{x_solution1}
\end{equation}

When we sample the $u(t)$ with time interval $\Delta$, $t$ becomes $k\Delta$, where $k=0,1,...$ is a positive integer. The Zero-order Hold method assumes $u(t)=u(k\Delta)$. For $t\in [k\Delta, (k+1)\Delta ]$. Thus, we have
\begin{equation}
	x((k+1)\Delta)=e^{A(\Delta)}x(k\Delta) + \int_{k\Delta}^{(k+1)\Delta}e^{A((k+1)\Delta-\tau) }d\tau Bu(k\Delta) \label{dis_solution}
\end{equation}

We abbreviate $x((k+1)\Delta)$, $x(k\Delta)$, and $u(k\Delta)$ as $x_{k+1}$, $x_k$, and $u_k$, respectively. Here, we get the discrete transition function.
\begin{equation}
	x_{k+1}=\bar{A}x_k + \bar{B}u_k \label{dis_solution1}
\end{equation}
with $\bar{A}=e^{A\Delta}$, $\bar{B}=\int_{k\Delta}^{(k+1)\Delta}e^{A((k+1)\Delta-\tau) }d\tau B$.

We can further simplify $\bar{B}$ assuming that $A$ is invertible.
\begin{align}
	\bar{B}&=\int_{k\Delta}^{(k+1)\Delta}e^{A((k+1)\Delta-\tau) }d\tau B \notag \\
	&=\int_{0}^{\Delta}e^{At}dt B \notag\\
	&=\int_{0}^{\Delta}A^{-1}\frac{de^{At}}{dt}dt B \notag\\
	&=A^{-1}(e^{At}-I)B
\end{align}

\subsubsection{Numerical Approximation}
Based on Taylor series expansion, the first derivative of $x$ can be approximated by numerical differentiation. Using the forward Euler Eq. \eqref{for_euler}, or the backward Euler Eq. \eqref{back_euler} to approximate ${x}$, we obtain the $\bar{A}$, and $\bar{B}$ as described in Eq. \eqref{discret_gbt}.
\begin{equation}
	\frac{x_{k+1}-x_k}{\Delta}\approx \dot{x}_k = Ax_t + Bu_t \label{for_euler}
\end{equation}
\begin{equation}
	\frac{x_{k+1}-x_k}{\Delta} \approx \dot{x} = Ax_{t+1} + Bu_t \label{back_euler}
\end{equation}

When we use the Generalized Bilinear Transformation method (GBT). 
\begin{equation}
	\label{discret_gbt}
	\begin{cases}
		\bar{A}= (I-\alpha A\Delta)^{-1}(I+(1-\alpha)\Delta A)\\
		\bar{B} = (I-\alpha\Delta A)^{-1}\Delta B
	\end{cases}
\end{equation}
There are three special cases for the GBT with different $\alpha$: the forward Euler method is GBT with $\alpha=0$, the Bilinear method is GBT with $\alpha=0.5$, and the backward Euler method is GBT with $\alpha=1$. Those methods approximate the differential equation based on Taylor series expansion. 

%\subsection{Fast Fourier Transform}
%\label{eSSM-app-fft}

\subsection{Parameterization and Initialization of eSSM}
\label{eSSM-app-para_initia}
The diagonal SSM have learnable parameters $\Lambda, B, C, D$, and a time step $\Delta$ for discretization. We introduce the parameterization and initialization of these parameters, respectively.

\textbf{Parameter $\Lambda$}. According to Proposition 1 in Section III. A, we know that all elements in $\Lambda$ must have negative real parts to ensure state convergence. Thus, we restrict $\Lambda$ with an enforcing function $f_+$, expressed as $-f_+(Re(\Lambda))+Im(\Lambda)i$. The enforcing function $f_+$ outputs positive real numbers and may have many forms, for example, the Gaussian function, the rectified linear unit function (ReLU), and the Sigmoid function. A random or constant function can initialize $\Lambda$. Besides, it can be initialized by the eigenvalues of some specially structured matrices, such as the HiPPO matrix introduced in \cite{gu2020hippo}. We initialize $\Lambda$ via HiPPO throughout this work.

\textbf{Parameter $B$ and $C$}. $B$ and $C$ are the parameters of the linear projection function. We parameterize them as learnable full matrices. Furthermore, the initialization of $B$ is given as random numbers under HiPPO framework as introduced in section \ref{eSSM-app-hippo}. $C$ is initialized by truncated normal distribution.

\textbf{Parameter $D$}. Different parameterizations of $D$ have different meanings. If we parameterize $D$ as an untrainable zero matrix, the output of SSM is only dependent on the state. When the input and output are the same size, we can parameterize it as an identity matrix, also known as residual connection \cite{he2016deep}. In this work, $D$ is parameterized as a trainable diagonal matrix, which is initialized by a constant 1 in this work.

\textbf{Parameter $\Delta$}. $\Delta \in \mathbb{R}$ is a scalar for a given SSM. We set it as a learnable parameter and initialize it by randomly sampling from a bounded interval. This work uses [0.001, 0.1] as fault choice if not otherwise specified. We experimentally find that relaxing the size of $\Delta$ from $\mathbb{R}$ to $\mathbb{R}^N$ will improve model accuracy, which is also reported in S5 \cite{smith2022simplified}. Therefore, $\Delta \in \mathbb{R}^N$ is used across all experiments.

\subsection{HiPPO Initialization}
\label{eSSM-app-hippo}
HiPPO theory introduces a way to compress continuous signals and discrete-time series by projection onto polynomial bases \cite{gu2020hippo}. The continuous SSM, as a particular type of ordinary differential equation (ODE), also belong to this framework. Thus, the structured HiPPO matrix shall serve as a good initialization method of $\Lambda, B$. Following \cite{smith2022simplified}, we choose the HiPPO-LegS matrix for initialization, which is defined as

\begin{align}
	\mathbf{A}_{nk} &= - \begin{cases}
		(2n+1)^{1/2}(2k+1)^{1/2}, &  n>k \\
		n+1, & n = k \\
		0, &  n < k
	\end{cases}.\\
	b_{n}&=(2n+1)^{\frac{1}{2}}.
\end{align}

The naive diagonalization of $\mathbf{A}_{nk}$ to initialize $\Lambda$ would lead to numerically infeasible and unstable issues. Gu et al. \cite{gu2021efficiently} proposed that this problem is solved by equivalently transforming $\mathbf{A}_{nk}$ into a normal plus low-rank (NPLR) matrix, which is expressed as a normal matrix
\begin{align}
	\mathbf{A}^{\mathrm{Normal}}=\mathbf{V}\mathbf{\Lambda} \mathbf{V}^*
\end{align}
together with a low-rank term.
\begin{align}
	\mathbf{A}  =  \mathbf{A}^{\mathrm{Normal}} - \mathbf{P}\mathbf{Q}^{\top} =\mathbf{V}\left(\mathbf{\Lambda}- (\mathbf{V}^*\mathbf{P})(\mathbf{V}^*\mathbf{Q})^*\right)\mathbf{V}^* 
\end{align}

where unitary $\mathbf{V}\in \mathbb{C}^{N \times N}$, diagonal $\mathbf{\Lambda}\in \mathbb{C}^{N \times N}$, and low-rank factorization $\mathbf{P}, \mathbf{Q}\in \mathbb{R}^{N\times r}$.  

The HiPPO-LegS matrix can be further rewritten as
\begin{align}
	\mathbf{A}_{nk}  &= \mathbf{A}^{\mathrm{Normal}}-\mathbf{P}\mathbf{P}^{\top}
\end{align}
where
\begin{align}
	\mathbf{A}^{\mathrm{Normal}} &= 
	-\begin{cases}
		(n+\frac{1}{2})^{1/2}(k+\frac{1}{2})^{1/2}, & n>k\\
		\frac{1}{2}, & n=k\\
		(n+\frac{1}{2})^{1/2}(k+\frac{1}{2})^{1/2}, & n < k
	\end{cases}.\\
	\mathbf{P}_n &= (n+\frac{1}{2})^{\frac{1}{2}}
\end{align}

We initialize $\Lambda$ using the eigenvalue of $\mathbf{A}^{\mathrm{Normal}}$. Following S5 \cite{smith2022simplified}, the eigenvectors  of $\mathbf{A}^{\mathrm{Normal}}$ are used for $B, C$ initialization.

\clearpage
\section{Comparison with related models}
\label{eSSM-app-comparison}

\subsection{Structure Comparison of SSM}
\label{eSSM-app-structure}
According to SSM's different input and output dimensions, the current related work can be divided into two categories. One type is built on SSM with single-input single-output (SISO), including S4, DSS, and S4D; the other type is based on SSM with multi-input multi-output (MIMO), including S5 and eSSM in this work. 

As shown in Fig. \ref{InternalStructure}, SISO SSM uses univariate sequences as input and output, while MIMO SSM directly models multivariate sequences. Usually, multiple SISO SSM are used to model a multivariate sequence independently, and then a linear layer is used for feature fusion, as used in S4 and DSS. Compared with SISO SSM, MIMO SSM does not require an additional linear layer.

Table \ref{tab:compare-ssm-extend} concludes the structure of different SSM-based models. Except for S4, other methods are based on diagonal SSM. S5 is the only one that directly utilizes recursive SSM for reasoning and learning. Though other models can perform recursive reasoning, learning is based on convolutional SSM.

\begin{sidewaysfigure}[p]
	\centering
	\includegraphics[width=\columnwidth]{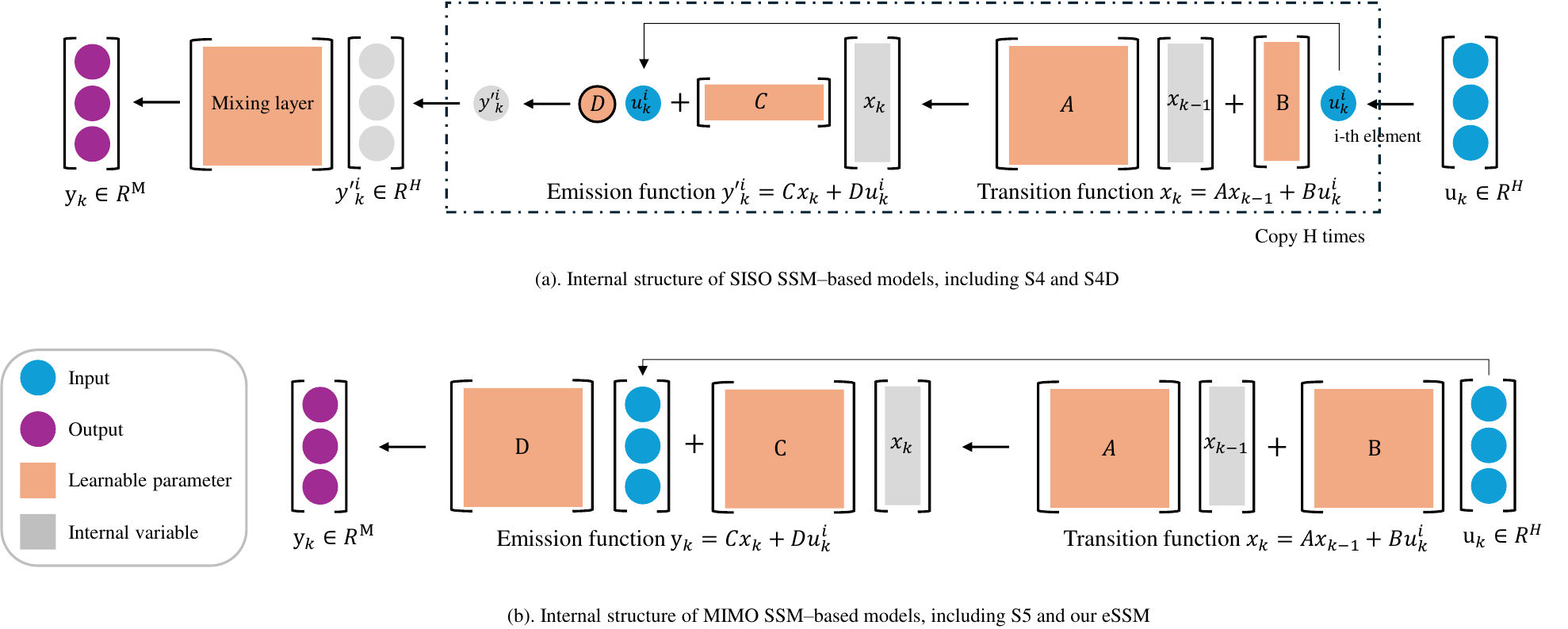} 
	\caption{The Internal Structure of SSM with SISO (a) and MIMO (b).}
	\label{InternalStructure}
\end{sidewaysfigure}

\begin{table}[!h]
	\caption{Extend Comparison of Different SSM-based Models.}
	\label{tab:compare-ssm-extend}
	\centering
	\resizebox{\columnwidth}{!}{
	\begin{tabular}{cccccccc}
		\hline
		Model & Type & Structure & Convolutional & Kernel Computation & Convolution & Recurrence & Discretiztion \\ \hline
		S4    & SISO & DPLR      &  \Checkmark   & Cauchy       	  &    FFT      &    vanilla   	   &   Bilinear    \\
		DSS   & SISO & Diagonal  &  \Checkmark   & softmax     		  &    FFT   	&    vanilla       &   ZOH 		  \\
		S4D   & SISO & Diagonal  &  \Checkmark   & Vandermonde		  &    FFT  	&    vanilla   	   &   Optional     \\ \hline
		S5    & MIMO & Diagonal  &  \XSolid      &   \XSolid          &    \XSolid 	& Scan operation   &   Bilinear     \\
		eSSM  & MIMO & Diagonal  &  \Checkmark   & Vandermonde 		  &    FFT   	& vanilla   &   ZOH     \\ \hline
	\end{tabular}
	}
\end{table}

\begin{comment}
\subsection{Parameterization and Initialization of SSM}
\label{eSSM-app-para-initia}
% Please add the following required packages to your document preamble:
% \usepackage{multirow}
\begin{table}[]
	\caption{Extend Comparison of different SSM-based models.}
	\label{tab:compare-ssm-extend}
	\begin{tabular}{ccccccccc}
		\hline
		\multirow{2}{*}{Model} & \multicolumn{3}{c}{A}                 & B         & C & D         &  & Delta \\
		& Structure & Initalization & Constrain & Trainable &   & Structure &  &       \\ \hline
		LSSL                   &           & HiPPO         &           & yes       &   & vector    &  &       \\
		S4                     & DPLR      & HiPPO         &           & yes       &   & vector    &  &       \\
		DSS                    & diagonal  & HiPPO-D       &           & no        &   & vector    &  &       \\
		S4D                    & diagonal  & various       &           & yes       &   & vector    &  &       \\
		S5                     & diagonal  & HiPPO-D       &           & yes       &   & diagonal  &  &       \\
		eSSM                   & diagonal  & various       &           & yes       &   & optional  &  &       \\ \hline
	\end{tabular}
\end{table}
\end{comment}

\subsection{Relationship Between S4, S5, and eSSM}
\label{eSSM-app-relation}
S4 and S5 are the most representative works in SISO and MIMO SSM. Here, we analyze the relationship between eSSM and them. Fig \ref{fig: ComputationalFlow} presents the computational flow of those models. The following statements are summarized:
\begin{itemize}
	\item S4 is based on the SISO SSM, while S5 and eSSM are based on MIMO SSM.
	\item S4 uses a DPLR parameterization for system matrix $A$. S5 and LNDD both use diagonal SSM.
	\item All three models can make inferences in recurrent mode. However, they differ in the learning process. S4 and eSSM learn in convolutional representations, but S5 learns in recurrent representation.
	\item S4 calculates the kernel and convolution in the frequency domain, but eSSM calculates convolution in the frequency domain and the kernel in the time domain.
	\item Multi-Head LNDD and multi-copy of S4 are block-diagonal MIMO SSM, but they differ in the structure of SISO and MIMO SSM, as shown in Fig \ref{InternalStructure}.
	\item S4 with $H$ copies is the special case of Multi-Head LNDD with head number $S$ = input size $H$.
	\item S5 is equivalent to Multi-Head eSSM with head $S$=1.
	\item $A, B$, and $C$ in S4 and S5 are complex numbers, but eSSM only parameterizes $A$ as complex numbers.
	\item Bidirectional settings in eSSM do not introduce additional parameters, but S4 and S5 do.
\end{itemize}

\begin{sidewaysfigure}[p] 
	\centering
	\includegraphics[width=\columnwidth]{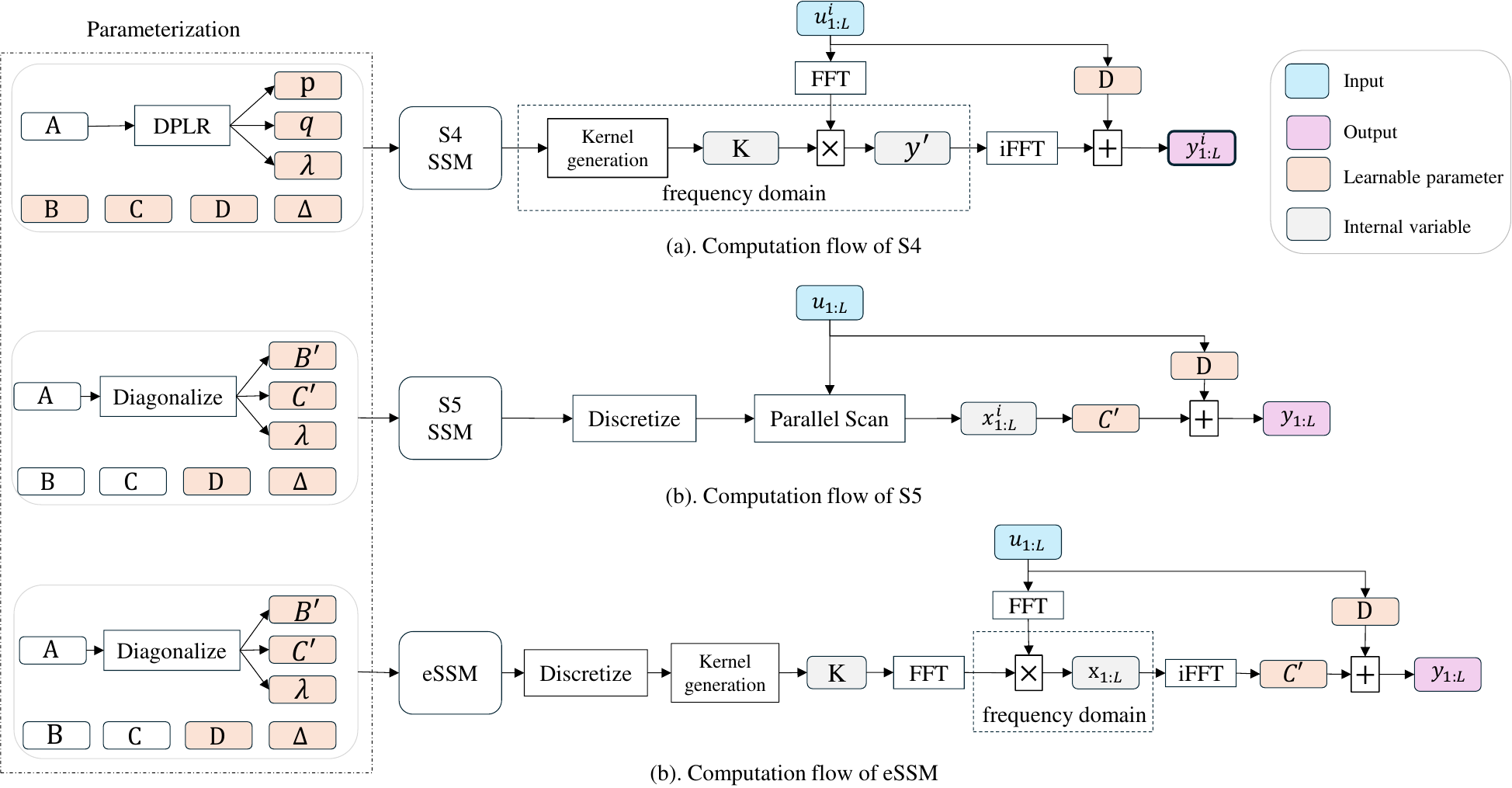} 
	\caption{The Computational Flow of S4 (a), S5 (b), and our eSSM (c).}
	\label{fig: ComputationalFlow}
\end{sidewaysfigure}

\newpage

\section{Supplementary Results}
\label{eSSM-app-sup-result}
\subsection{Extend Results on LRA}

\begin{table*}[!h]
	\centering
	\caption{Extend Results on the LRA benchmark tasks.}
	\label{tab:LRA_all}
	\begin{tabular}{cccccccc}
			\hline
			Model        & ListOps & Text  & Retrieval & Image & Pathfinder & Path-X & Avg.  \\
			length       & 2,000    & 4,096  & 4,000      & 1,024  & 1,024       & 16,384  & -     \\ \hline
			Transformer \cite{vaswani2017attention}  & 36.37   & 64.27 & 57.46     & 42.44 & 71.40      & -      & 53.66 \\
			Reformer \cite{kitaev2020reformer}     	 & 37.27   & 56.10 & 53.40     & 38.07 & 68.50      & -      & 50.56 \\
			Performer \cite{choromanski2020rethinking}& 18.01   & 65.40 & 53.82     & 42.77 & 77.05      & -      & 51.18 \\ 
			Linear Trans \cite{katharopoulos2020transformers}& 16.13   & 65.90 &  53.09     & 42.34 & 75.30      & -      & 50.46 \\

			BigBird \cite{zaheer2020big}&  36.05   & 64.02 & 59.29     &  40.83 & 74.87      & -      & 54.17 \\ 
			Luna-256 \cite{ma2021luna}     			 & 37.25   & 64.57 & 79.29     & 47.38 & 77.72      & -      & 59.37 \\ \hline
			FNet \cite{lee2021fnet}        			 & 35.33   & 65.11 & 59.61     & 38.67 & 77.80      & -      & 54.42 \\ 
			Nystr{\"o}mformer \cite{xiong2021nystromformer}    & 37.15   & 65.52 & 79.56   & 41.58 &70.94      & -      & 57.46 \\ 
			 H-Transformer-1D \cite{zhu2021h}        			 &   49.53   &  78.69 & 63.99     &  46.05 &  68.78      & -      & 61.42 \\ 
			 CCNN \cite{romero2022towards}       			 &  43.60   &  84.08 & -    &  88.90 & 91.51      & -      & 68.02 \\ 
			CDIL-CNN \cite{cheng2023classification}  & 60.60   & 87.62 & 84.27     & 64.49 & 91.00      & -      & 77.59 \\ \hline
			S4 \cite{gu2021efficiently}          	 & 59.60   & 86.82 & \underline{90.90}     & \textbf{88.65} & \underline{94.2}      & \underline{96.35}  & \underline{86.09} \\
			DSS	\cite{gupta2022diagonal}			 & 60.6    & 84.8  & 87.8      & 85.7  & 84.6       & 87.8   & 81.88 \\
			S4D \cite{gu2022parameterization}        & 60.47   & 86.18 & 89.46     & \underline{88.19} & 93.06      & 91.95  & 84.89 \\
			S5 \cite{smith2022simplified}            & \underline{62.15}   & \textbf{89.31} & \textbf{91.40}     & 88.00 & \textbf{95.33}      & \textbf{98.58}  & \textbf{87.46} \\ \hline
			eSSM        							 & \textbf{62.20}  & \underline{88.25} & 90.15     & 87.25 & 93.87      &      92.76     &  85.75 \\ \hline
	\end{tabular}%
\end{table*}

\newpage

\subsection{Extend Results on Raw Speech Classification}

% \begin{table}[!h]
% 	\centering
% 	\caption{Test accuracy on 10-way Speech Commands classification task.}
% 	\label{tab: speech 10}
% 	\begin{tabular}{cccc}
% 		\hline
% 		Model        & MFCC  &  16kHz    & 8kHz  \\
% 		(Length)     & (784) & (16,000) & (8,000) \\ \hline
% 		Transformer \cite{trinh2018learning, vaswani2017attention}  & 90.75 & -        & -       \\
% 		Performer \cite{choromanski2020rethinking}    & 80.85  & 30.77   & 30.68   \\
% 		ODE-RNN  \cite{rubanova2019latent}     & 65.9  & -        & -       \\
% 		NRDE  \cite{kidger2020neural}       & 89.8  & 16.49    & 15.12   \\
% 		ExpRNN  \cite{lezcano2019cheap}     & 82.13 & 11.6     & 10.8    \\
% 		LipschitzRNN \cite{erichson2020lipschitz} & 88.38 & -        & -       \\
% 		CKConv \cite{romero2021ckconv}      & \textbf{95.3}  & 71.66    & 65.96   \\
% 		WaveGAN-D \cite{donahue2018adversarial}   & -     & 96.25    & -       \\ \hline
% 		LSSL \cite{gu2021combining}        & 93.58 & -        & -       \\
% 		S4 \cite{gu2021efficiently}           & 93.96 & \textbf{98.32}    & \textbf{96.30}   \\ \hline
% 		eSSM         & \underline{94.46} &\underline{97.59}    & \underline{94.23}   \\ \hline
% 	\end{tabular}
% \end{table}

\begin{table}[!h]
	\centering
	\caption{Test accuracy on 35-way Speech Commands classification task. eSSM achieves SOTA accuracy with fewest parameters.}
	\label{tab:extend speech}
	\begin{tabular}{cccc}
		\hline
		Model          & Parameters & 16kHz    & 8kHz    \\
		(Length)       &       & (16,000) & (8,000) \\ \hline
		InceptionNet \cite{nonaka2021depth}  & 481K       & 61.24    & 05.18   \\
		%ResNet-18  \cite{nonaka2021depth}     & \textbf{216K}       & 77.86    & 08.74   \\
		XResNet-50 \cite{nonaka2021depth}    & 904K       & 83.01    & 07.72   \\
		ConvNet  \cite{nonaka2021depth}      & 26.2M      & 95.51    & 07.26   \\ \hline
		S4-LegS \cite{gu2021efficiently}       & 307K       & 96.08    & 91.32   \\
		S4-FouT \cite{gu2022train}        & 307K       & 95.27    & \underline{91.59}   \\
		S4-(LegS/FouT) \cite{gu2022train} & 307K       & 95.32    & 90.72   \\
		S4D-LegS \cite{gupta2022diagonal}      & 306K       & 95.83    & 91.08   \\
		S4D-Inv \cite{gupta2022diagonal}       & 306K       & 96.18    & 91.80   \\
		S4D-Lin  \cite{gupta2022diagonal}      & 306K       & 96.25    & 91.58   \\
		Liquid-S4 \cite{hasani2022liquid}     & \underline{224K}       & \textbf{96.78}    & 90.00   \\
		S5 \cite{smith2022simplified}        & 280K       & \underline{96.52}    & \textbf{94.53}   \\ \hline
		eSSM           & \textbf{220K}       & 96.08   &  88.83   \\ \hline
	\end{tabular}
\end{table}

\newpage

\subsection{Results on Pixel-level 1-D Image classification}
\begin{table}[!h]
	\centering
	\caption{Test accuracy on Pixel-level 1-D Image classification.}
	\label{tab:extend1d-image}
	\begin{tabular}{cccc}
		\hline
		Model        & sMNIST & psMNIST & sCIFAR \\
		(Length)     & (784)  & (784)   & (1024) \\ \hline
		Transformer \cite{trinh2018learning, vaswani2017attention} & 98.9   & 97.9    & 62.2   \\
		% CCNN \cite{romero2022towards}        & \textbf{99.72}  & \textbf{98.84}   & \textbf{93.08}  \\
		FlexTCN \cite{romero2021flexconv}     & 99.62  & 98.63   & 80.82  \\
		CKConv  \cite{romero2021ckconv}       & 99.32  & 98.54   & 63.74  \\
		TrellisNet  \cite{bai2018trellis} & 99.20  & 98.13   & 73.42  \\
		TCN \cite{bai2018empirical}       & 99.0   & 97.2    & -      \\ \hline
		LSTM  \cite{gu2020improving,hochreiter1997long}         & 98.9   & 95.11   & 63.01  \\
		r-LSTM \cite{trinh2018learning}      & 98.4   & 95.2    & 72.2   \\
		Dilated GRU \cite{chang2017dilated} & 99.0   & 94.6    & -      \\
		Dilated RNN \cite{chang2017dilated} & 98.0   & 96.1    & -      \\
		IndRNN \cite{li2018independently}      & 99.0   & 96.0    & -      \\
		expRNN  \cite{lezcano2019cheap}     & 98.7   & 96.6    & -      \\
		UR-LSTM  \cite{gu2020improving}    & 99.28  & 96.96   & 71.00  \\
		UR-GRU  \cite{gu2020improving}     & 99.27  & 96.51   & 74.4   \\
		LMU   \cite{voelker2019legendre}       & -      & 97.15   & -      \\
		HiPPO-RNN  \cite{gu2020hippo}  & 98.9   & 98.3    & 61.1   \\
		UNIcoRNN \cite{rusch2021unicornn}    & -      & 98.4    & -      \\
		LMU-FFT  \cite{chilkuri2021parallelizing}    & -      & 98.49   & -      \\
		LipschitzRNN \cite{erichson2020lipschitz}& 99.4   & 96.3    & 64.2   \\ \hline
		LSSL \cite{gu2021combining}        & 99.53  & \textbf{98.76}   & 84.65  \\
		S4 \cite{gu2021efficiently}          & \underline{99.63}  & 98.70   & \underline{91.80}  \\
		S4D \cite{gu2022parameterization}          & -      & -       & 89.92  \\
		Liquid-S4 \cite{hasani2022liquid}   & -      & -       & \textbf{92.02}  \\
		S5  \cite{smith2022simplified}          & \textbf{99.65}  & \underline{98.67}   & 90.10  \\ \hline
		eSSM         & 99.54  & 98.45   & 88.12  \\ \hline
	\end{tabular}
\end{table}

\clearpage
\section{Experimental Configurations for Reproducibility}
\label{eSSM-app-configs}

\subsection{Hyperparameters}
Details of all experiments are described in this part. Table \ref{tab: hyper_para} lists the key hyperparameter , including model depth, learning rate, and so on.

\begin{table*}[!h]
	\centering
	\caption{Key hyperparameters.}
	\label{tab: hyper_para}
	\resizebox{\textwidth}{!}{%
		\begin{tabular}{ccccccccccccc}
			\hline
			\multirow{2}{*}{Dataset} &
			\multirow{2}{*}{Batch} &
			\multirow{2}{*}{Epoch} &
			\multirow{2}{*}{Depth} &
			\multirow{2}{*}{Head} &
			\multirow{2}{*}{H} &
			\multirow{2}{*}{N} &
			\multicolumn{1}{l}{\multirow{2}{*}{M}} &
			\multicolumn{2}{c}{LR} &
			\multirow{2}{*}{Dropout} &
			\multirow{2}{*}{Prenorm} & \\
			&     &   & && & & \multicolumn{1}{l}{} & \multicolumn{1}{l} {SSM} & Others & & \\ \hline
			ListOps    & 100  & 80 & 6 & 4 & 256 & 256 & 256& 0.01  & 0.01    & 0    &  False  \\
			Text       & 16  & 80  & 6 & 256 & 256 & 256 & 256& 0.001 & 0.004   & 0.1  &  True  \\
			Retrieval  & 32  & 20  & 6 & 64&128  & 128 & 128& 0.001 & 0.002   & 0    &  True  \\
			Image      & 50 & 200 & 6 & 64 & 256 & 512 & 256& 0.001 & 0.005    & 0.1 &  False \\
			Pathfinder & 64 & 300 & 6 & 8 & 192 & 256 & 192& 0.001 & 0.005 & 0.05  &   True  \\
			Pathx & 8 & 200 & 6 & 8 & 192 & 256 & 192 & 0.0005 & 0.001 & 0  &   True  \\ \hline
			%PathX      & 32  & 100 & 6 & 4 & 512 & 128 & 512                  & 0.0005                        &          & 0   &       & LeackyReLU & False \\ \hline
			SC10-MFCC & 16 & 80 & 4 & 32 & 128 & 128 & 128 & 0.001 & 0.006 & 0.1  &  False  \\
			SC10 & 16 & 150 & 6 & 32 & 128 & 128 & 128 & 0.001 & 0.006 &  0.1  &  True \\
			SC35 & 16 & 100 & 6 & 32 & 128 & 128 & 128 & 0.001 & 0.008 &  0.1 &  False \\  \hline
			sMNSIT & 50 & 150 & 4 & 16 & 128 & 96 & 128 & 0.002 & 0.008 & 0.1 & True  \\
			psMNIST & 50 & 200 & 4 & 8 & 128 & 128 & 128 & 0.001 & 0.004& 0.15 &  True  \\
			sCIFAR & 50 & 200 & 6 & 64 & 256 & 512 & 256& 0.001 & 0.005    & 0.1 &  True  \\  \hline
		\end{tabular}%
	}
\end{table*}

\paragraph{Activation}
MIMO SSM directly models the multivariate sequence; no additional layer is needed to mix features. Therefore, we follow S5 and use a weighted sigmoid gated unit. Specifically, the eSSM outout $\mathbf{y}_k\in \mathbb{R}^M$ is fed into the activation function expressed as $\mathbf{u}_k' = \mathrm{Gelu}(\mathbf{y}_k) \odot \sigma(\mathbf{W}*\mathrm{Gelu}(\mathbf{y}_k))$, where $\mathbf{W}\in \mathbb{R}^{M \times M}$ is a learnable dense matrix. This activation function is used as the default setting if not otherwise specified.

\paragraph{Normalization}
Either batch or layer normalization is applied before or after eSSM. Batch normalization after eSSM is used if not otherwise specified.

\paragraph{Initialization} All experiments are initialized using the same configuration introduced in \ref{eSSM-app-para_initia}.

\paragraph{Loss and Metric} Cross-entropy loss is used for all classification tasks. Binary or multi-class accuracy is used for metric evaluation.

\paragraph{Optimizer}
AdamW is used across all experiments. The learning rate applied to SSM is named $LR_{SSM}$, and the other is named $LR_{other}$. The learning rate is dynamically adjusted by $CosineAnnealingLR$ or $ReduceLROnPlateau$ in PyTorch.

\subsection{Task Specific Hyperparameters}\label{eSSM-spd-task-hp}
Here, we specify any task-specific details, hyperparameters, or architectural differences from the defaults outlined above. 

\subsubsection{Listops}
The bidirectional setting is not used. Leakyrelu activation is applied. $C$ is initialized by HiPPO.

\subsubsection{Text}
The learning rate is adjusted by $ReduceLROnPlateau$ with factor=0.5, patience=5. $LR_{other}$ is applied to SSM parameter $C$.

\subsubsection{Retrieval}
We follow the experimental configuration in S4. The model takes two documents as input and outputs two sequences. A mean pooling layer is then used to transform these two sequences into vectors, noted as $y_1$ and $y_2$. Four features are created by concatenating $y_1$ and $y_2$ as following
\begin{align}
	y = [y_1, y_2, y_1*y_2, y_1 - y_2].
\end{align}
This concatenated feature is then fed to a linear layer and gelu function for binary classification.

The learning rate is adjusted by $CosineAnnealingLR$ with warmup steps=1,000 and total training steps=50,000. $LR_{other}$ is applied to SSM parameter $C$.
\subsubsection{Image}
The learning rate is adjusted by $ReduceLROnPlateau$ with factor=0.6, patience=5. $LR_{other}$ is applied to SSM parameters $B$ and $C$. Data augmentation, including horizontal flips and random crops, is applied.

\subsubsection{Pathfinder}
The learning rate is adjusted by $CosineAnnealingLR$ with warmup steps=5,000 and total training steps=40,000. $LR_{other}$ is applied to SSM parameter $C$.

\subsubsection{Path-X}
The learning rate is adjusted by $CosineAnnealingLR$ with warmup steps=10,000 and total training steps=1,000,000. $LR_{other}$ is applied to SSM parameter $C$. $\Delta$ is initialized by uniformly sampling from [0.0001, 0.1]. 50\% training set is used before epoch=110. Validation and testing sets remain unchanged. A scale factor of 0.0625 is applied to $\Delta$.

\subsubsection{Speech Commands 10 - MFCC}
The learning rate is adjusted by $ReduceLROnPlateau$ with factor=0.2, patience=5. $LR_{other}$ is applied to SSM parameter $C$.

\subsubsection{Speech Commands 10}
The learning rate is adjusted by $ReduceLROnPlateau$ with factor=0.2, patience=10. $LR_{other}$ is applied to SSM parameter $C$.

\subsubsection{Speech Commands 35}
The learning rate is adjusted by $CosineAnnealingLR$ with total training steps=270,000.

\subsubsection{Sequential MNIST}
The learning rate is adjusted by $ReduceLROnPlateau$ with factor=0.2, patience=10. $LR_{other}$ is applied to SSM parameters $B$ and $C$.

\subsubsection{Permuted Sequential MNIST}
The learning rate is adjusted by $CosineAnnealingLR$ with warmup steps=1,000 and total training steps=81,000. $LR_{other}$ is applied to SSM parameter $C$.

\subsubsection{Sequential CIFAR}
The same hyperparameter is used as in LRA-Image.

\subsection{Dataset Details}
Here, we provide more detailed introductions to LRA, Speech Commands, and 1D image classification. This work follows the same data preprocessing process of S4 and S5. For the preprocessing details of each task, please refer to the code we provide at \href{https://github.com/leonty1/DeepeSSM}{https://github.com/leonty1/DeepeSSM}.

\subsubsection*{LRA}
\texttt{ListOps}:  
The ListOps contains mathematical operations performed on lists of single-digit integers, expressed in prefix notation \cite{nangia2018listops}. The goal is to predict each complete sequence's corresponding solution, which is also a single-digit integer. Consequently, this constitutes a ten-way balanced classification problem. For example, [MIN 2 9 [MAX 4 7 ] 0 ] has the solution 0. All sequences have a uniform length of 2000 (if not padded with zero). The dataset has a total of 10,000 samples, which are divided into 8:1:1 for training, validation, and testing. 

\texttt{Text}: 
This dataset is based on the IMDB sentiment dataset. This task aims to classify the sentiment of a given movie review (text) as either positive or negative. For example, a positive comment: 'Probably my all-time favorite movie,...'. The maximum length of each sequence is $4,096$. IMDB contains $25,000$ training examples and $25,000$ testing examples.

\texttt{Retrieval}: 
This task measures the similarity between two sequences based on the AAN dataset \cite{radev2013acl}. The maximum length of each sequence is 4,000. It is a binary classification task. There are $147,086$ training samples, $18,090$ validation samples, and $17,437$ test samples. 

\texttt{Image}: This task is based on the CIFAR-10 dataset \cite{krizhevsky2009learning}. Grayscale CIFAR-10 image has a resolution of $32\times 32$, which is flattened into a 1D sequence for a ten-way classification. All sequences have a length of 1024. It has $45,000$ training examples, $5,000$ validation examples, and $10,000$ test examples.

\texttt{Pathfinder}: This task aims to classify whether the two small circles depicted in the picture are connected with dashed lines, constituting a binary classification task  \cite{linsley2018learning}. A grayscale image has a size of $32 \times 32$, which is flattened into a sequence with length $1,024$. There are $200,000$ examples, which are split into 8:1:1 for training, validation, and testing process.

\texttt{Path-X}: A more challenging version of the \texttt{Pathfinder}.  The image's resolution was increased to $128 \times 128$, resulting in a sixteenfold increase in sequence length, from 1024 to 16,384.

\subsubsection*{Raw Speech Commands}
\texttt{Speech Commands-35}:  
This dataset records audio of 35 different words \cite{warden2018speech}. This task aims to determine which word a given audio is. It is a multi-classification problem with 35 categories. There are two audio collection frequencies, $16 KHz$ and $8 KHz$. All audio sequences have the same length, $16,000$ if sampled at $16 KHz$ or $8,000$ if sampled at $16 KHz$. It contains $24,482$ training samples, $5,246$ validation samples, and $5,247$ testing samples.

\texttt{Speech Commands-10}:  This database contains ten categories of audio, a subset of \texttt{Speech Commands-35}.

\texttt{Speech Commands-MFCC}: The original audio in \texttt{Speech Commands-10} is pre-processed into MFCC features with length of 161. 

\subsubsection*{Pixel-level 1-D Image Classification}
\texttt{Sequential MNIST} (sMNIST) :
10-way digit classification from a $28 \times 28$ grayscale image of a handwritten digit, where the input image is flattened into a $784$-length scalar sequence.

\texttt{Permuted Sequential MNIST} (psMNIST):  
This task aims to perform 10-category digit classification from a $28\times28$ grayscale image of handwritten digits. The original image is first flattened into a sequence of length 784. Next, this sequence is rearranged in a fixed order.

\texttt{Sequential CIFAR} (sCIFAR):
Color version of \texttt{image} task, where each image is an (R,G,B) triple.

\subsection{Implementation Configurations}
The experiments of accuracy are conducted with:
\begin{itemize}
    \item Operating System: Windows 10, version 22H2
    \item CPU: AMD Ryzen Threadripper 3960X 24-Core Processor @ 3.8GHz
    \item GPU: NVIDIA GeForce RTX 3090 with 24 GB of memory
    \item Software: Python 3.9.12, Cuda 11.3, PyTorch~\cite{paszke2019pytorch} 1.12.1.
\end{itemize}

The efficiency benchmark experiments are conducted with:
\begin{itemize}
    \item Operating System: Ubuntu 18.04
    \item CPU: Intel(R) Xeon(R) Platinum 8474C, 15 cores
    \item GPU: NVIDIA GeForce RTX 4090 with 24 GB of memory
    \item Software: Python 3.9.12, Cuda 11.3, PyTorch~\cite{paszke2019pytorch} 1.12.1.
\end{itemize} 
\clearpage
\section{PyTorch Implementation of eSSM Layer}
\label{eSSM-app-implementation}
\hspace*{0.2cm}
\begin{minipage}{13.8cm}
	\begin{center}
		\begin{lstlisting}[language=Python, basicstyle=\scriptsize\ttfamily, showlines=true, caption=PyTorch implementation to apply a single eSSM layer to a batch of input sequences.]
			import torch
			
			# B = batch, C = channel, S = head, H = input size, 
			# M = output size, N = state size, L = sequence length
			
			def discretize_zoh(Lambda, B, Delta):
			""" Discretize the diagonal, continuous-time linear SSM with MIMO
			Args:
			Lambda  (complex64): diagonal state matrix                      (C, S, N)
			B (complex64): input matrix            	                        (C, S, N, H)
			Delta   (float32):   discretization step sizes                  (C, S, N)
			Returns:
				discretized Lambda_bar (complex64), B_bar (complex64)           """
				Lambda_bar = Lambda * Delta
				Identity = torch.ones_like(Lambda)
				B_coef = (reciprocal(Lambda) * (torch.exp(Lambda_bar)-Identity))
				B_bar = torch.einsum('cn,cnh->cnh', B_coef, B)
				return Lambda_bar, B_bar
			
			def eSSM(Lambda_bar, B_bar, C_tilde, D, input):
			""" Discretized SSM as linear dynamic-embedded neural network.
			Args:
			input (float32): input sequence of features                   (B, H, L)
			Returns: y (float32): outputs                                 (B, M, L)    """
			
				#Split input into heads, h=H/S
				u = u.reshape(B, S, h, L)
				
				#Calculate B*u
				B_u = torch.einsum('csnh,bshl->bcsnl', B_bar, u)           
				       
				#Compute State Kernel
				length = torch.arange(Lk).cuda()
				p=torch.einsum('csn,l->csnl', Lambda_bar, length)
				state_kernel = p.exp()                                       # [channel, head, N, L]       
				state_kernel = state_kernel.real							 # real part of complex kernel
				 
				#Bidirectional kernel for non-causal state inference
				if self.bidirectional:
					#reversal backforward kernel
					state_kernel_new=F.pad(state_kernel,(0, L))+F.pad(state_kernel.flip(-1),( L, 0)) 
				else:
					state_kernel_new =state_kernel 
				
				#Efficient convolution for state  inference via FFT
				k_f = torch.fft.rfft(state_kernel_new, n=n)
				u_f = torch.fft.rfft(B_u, n=n)
				x_f = torch.einsum('bcsnl,csnl->bcsnl', u_f, k_f)
				x = torch.fft.irfft(x_f, n=n)[..., :L]
				
				#Calculate C*X
				C_x = torch.einsum('csmn, bcsnl->bcsml', C, x)
				
				#Calculate output with Du
				y = C_x + torch.einsum('csmh, bshl->bcsml', self.D, u_D)
				
				#Mix channels using linear projection
				y= dropout(y)
				y = self.channel_mixer(y)
				
				#Activation
				y=activation(y)
	
				return y
		\end{lstlisting}
	\end{center}
\end{minipage}
\clearpage

% \clearpage
\bibliographystyle{elsarticle-num}
\bibliography{0-main}
\bio{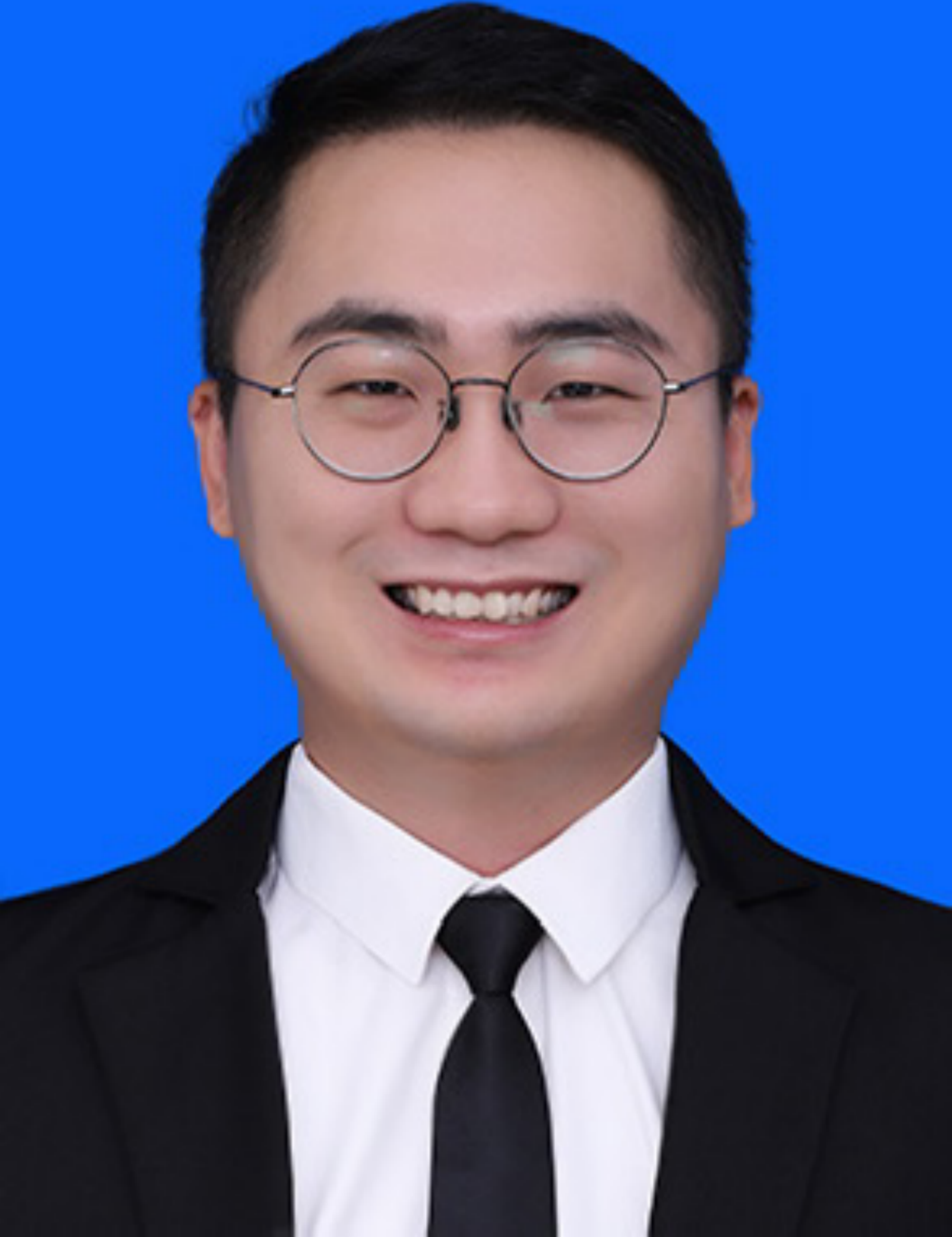}
Tongyi Liang received the B.E. degree in automotive engineering from the University of Science and Technology Beijing, Beijing, China, in 2017, the M.E. degree in automotive engineering from the Beihang University, Beijing, China, in 2020. He is currently working toward the Ph.D degree with the Department of Systems Engineering, City University of Hong Kong, Hong Kong, China. 
	
His current research interests focus on neural networks and deep learning.
\endbio

\bio{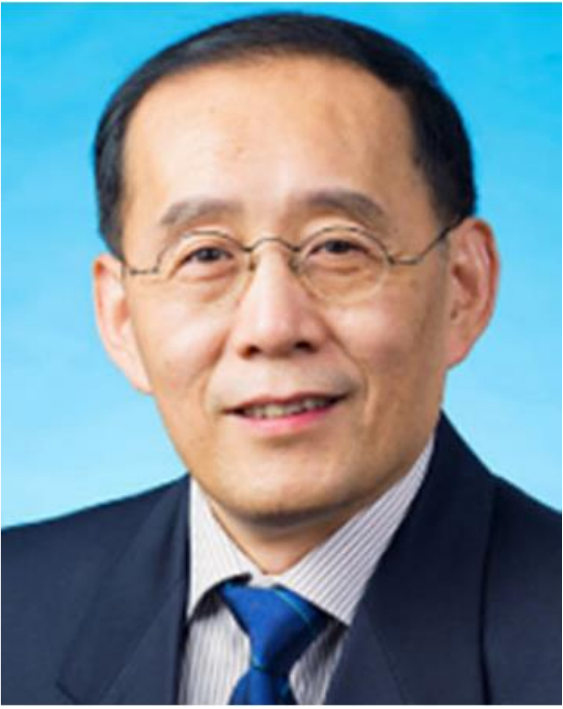}
Han-Xiong Li (Fellow, IEEE) received the B.E. degree in aerospace engineering from the National University of Defense Technology, Changsha, China, in 1982, the M.E. degree in electrical engineering from the Delft University of Technology, Delft, The Netherlands, in 1991, and the Ph.D. degree in electrical engineering from the University of Auckland, Auckland, New Zealand, in 1997.

He is the Chair Professor with the Department of Systems Engineering, City University of Hong Kong, Hong Kong. He has a broad experience in both academia and industry. He has authored two books and about 20 patents, and authored or coauthored more than 300 SCI journal papers with h-index 60 (web of science). His current research interests include process modeling and control, distributed parameter systems, and system intelligence.

Dr. Li is currently the Associate Editor for IEEE Transactions on SMC: System and was an Associate Editor for IEEE Transactions on Cybernetics (2002-–2016) and  IEEE Transactions on Industrial Electronics (2009–-2015). He was the recipient of the Distinguished Young Scholar (overseas) by the China National Science Foundation in 2004, Chang Jiang Professorship by the Ministry of Education, China in 2006, and National Professorship with China Thousand Talents Program in 2010. Since 2014, he has been rated as a highly cited scholar in China by Elsevier.
\endbio

\end{document}